\theoremstyle{plain}
\newtheorem{myTheo}{Theorem}
\theoremstyle{definition}
\theoremstyle{remark}
\newcommand{\abs}[1]{\left \vert #1 \right \vert}
\newcommand{\B}[1]{\mathbb{#1}}
\newcommand{\C}[1]{\mathcal{#1}}
\newcommand{\Rmnum}[1]{\expandafter\@slowromancap\romannumeral #1@}
\begin{document}

\title{Testing Scenario Library Generation for Connected and Automated Vehicles, Part II: Case Studies}

\author{Shuo~Feng,
	Yiheng~Feng,
	Haowei~Sun,
	Shao~Bao,
	Yi~Zhang,~\IEEEmembership{Member,~IEEE}
	and~Henry~X.~Liu,~\IEEEmembership{Member,~IEEE}
	
	\thanks{This work was supported by USDOT Center for Connected and Automated Transportation at the University of Michigan, Ann Arbor. The authors would like to thank Dr. Ding Zhao of Carnegie Mellon University for providing data in the cut-in case study. \emph{(Corresponding author: Henry X. Liu)}}
	\thanks{S. Feng and Y. Zhang are with the Department of Automation, Tsinghua University, Beijing 100084, China. S. Feng is also a visiting Ph.D. student of the Department of Civil and Environmental Engineering, University of Michigan, Ann Arbor, MI, 48109, USA. (e-mail: s-feng14@mails.tsinghua.edu.cn; zhyi@mail.tsinghua.edu.cn).}
	\thanks{Y. Feng and B. Shan are with the University of Michigan Transportation Research Institude, 2901 Baxer Rd, Ann Arbor, MI 48109, USA. (e-mail: yhfeng, shanbao@umich.edu)}
	\thanks{H. Sun and H. X. Liu are with the Department of Civil and Environmental Engineering, University of Michigan, United States. (e-mail: haoweis, henryliu@umich.edu).}
}

\markboth{IEEE Transactions on Intelligent Transaportation Systems,~Vol.~, No.~, ~2020}
{Shell \MakeLowercase{\textit{et al.}}: Bare Demo of IEEEtran.cls for IEEE Journals}

\maketitle

\begin{abstract}
Testing scenario library generation (TSLG) is a critical step for the development and deployment of connected and automated vehicles (CAVs). In Part I of this study, a general methodology for TSLG is proposed, and theoretical properties are investigated regarding the accuracy and efficiency of CAV evaluation. This paper aims to provide implementation examples and guidelines, and to enhance the proposed methodology under high-dimensional scenarios. Three typical cases, including cut-in, highway-exit, and car-following, are designed and studied in this paper. For each case, the process of library generation and CAV evaluation is elaborated. To address the challenges brought by high dimensionality, the proposed methodology is further enhanced by reinforcement learning technique. For all three cases, results show that the proposed methods can accelerate the CAV evaluation process by multiple magnitudes with same evaluation accuracy, if compared with the on-road test method.

\end{abstract}

\begin{IEEEkeywords}
 Connected and Automated Vehicles, Testing Scenario Library,  Safety, Functionality, Reinforcement Learning
\end{IEEEkeywords}

\IEEEpeerreviewmaketitle

\section{Introduction}

\IEEEPARstart{T}{esting} and evaluation is a critical step in the development and deployment of connected and automated vehicles (CAVs). In the past few years, increasing research efforts have been made to solve the testing scenario library generation (TSLG) problem \cite{li2018artificial}\cite{zhou2017reduced}\cite{PEGASUS}\cite{zhao2017accelerated}\cite{jung2007worst}\cite{zhao2018accelerated}\cite{li2019parallel} (see Part I of this study \cite{feng2019testing} for more details). However, all previous methods have limitations in either scenario types that can be handled (e.g., low-dimensional scenarios only), CAV models that can be applied (e.g., a specific CAV only), or performance metrics  that can be evaluated (e.g., safety evaluation only). 

To overcome these limitations, in Part I of this study \cite{feng2019testing}, we propose a general method for the TSLG problem for different scenario types, CAV models, and performance metrics. 
Testing scenario is evaluated by a newly proposed measure, scenario criticality, which can be computed as a combination of maneuver challenge and exposure frequency. The new measure is fundamentally different from most existing studies, which usually overvalue the worst-case scenarios  \cite{jung2007worst}\cite{zhao2018accelerated}. {In our proposed method, scenarios with higher occurrence probability in the real-world and higher maneuver challenges will have higher priority for CAV evaluation.}

Part I paper lays out the methodological foundation and proves the statistical accuracy and efficiency theoretically. To implement the proposed method, however, there exist several gaps: 

First, although the proposed framework is generic, some sub-problems vary case by case, e.g., auxiliary objective function design, naturalistic driving data analysis, and surrogate model construction. Carefully selected case studies will provide examples for the implementation of the overall framework. 

Second, it is significant to show the ability of the proposed methods in handling different performance metrics. Most existing studies focus only on safety evaluation, which is essential but insufficient for a deployable CAV. Besides safety, functionality is another important performance metric, which shows the CAV's ability to complete driving tasks in a timely manner. Designing and implementing testing scenarios for functionality evaluation are necessary. 

Finally, applying the proposed methods directly to high-dimensional cases can be problematic, as the computational complexity of critical scenario searching increases exponentially with the increase of dimensionality. However, most of the driving scenarios are naturally high-dimensional. So how to deal with high dimensional testing scenarios becomes an important issue that needs to be addressed. We should note that most existing studies also suffer from the ``curse of dimensionality''. For example, the PEGASUS project \cite{PEGASUS} applied an exhaustive searching method to find all scenarios, which is impossible for high-dimensional scenarios. The accelerated evaluation method proposed in \cite{zhao2017accelerated} also has difficulty in calibrating importance functions, where computational complexity grows exponentially with the dimensionality. 

This paper aims to fill in these gaps. Three common testing cases, including cut-in, highway exit, and car-following, will be discussed in this paper (see Fig. \ref{fig_Case}). The cut-in case illustrates each step of the scenario library generation process and our evaluation framework. Because the cut-in case is low dimensional, it is convenient to visualize the results and help readers better understand the proposed methods. The highway exit case focuses on the functionality evaluation. Compared with safety evaluation, the major difference lies in the design of auxiliary objective function for the library generation, i.e., how to quantify the maneuver challenge regarding functionality. The car-following case is designed to show the ability of the proposed methods under high-dimensional scenarios. To this end, the proposed methods in Part I are enhanced by reinforcement learning (RL) techniques. The RL-enhanced method shows the powerful ability of the  framework proposed in Part I in handling high-dimensional scenarios. For all three cases, results show that the proposed framework can effectively generate the testing scenario libraries and  accelerate the CAV evaluation process by multiple magnitudes with same evaluation accuracy, if compared with the on-road test method.

\begin{figure}[h!]
	\centering
	\begin{minipage}{.3\linewidth}
		\includegraphics[width=1\textwidth]{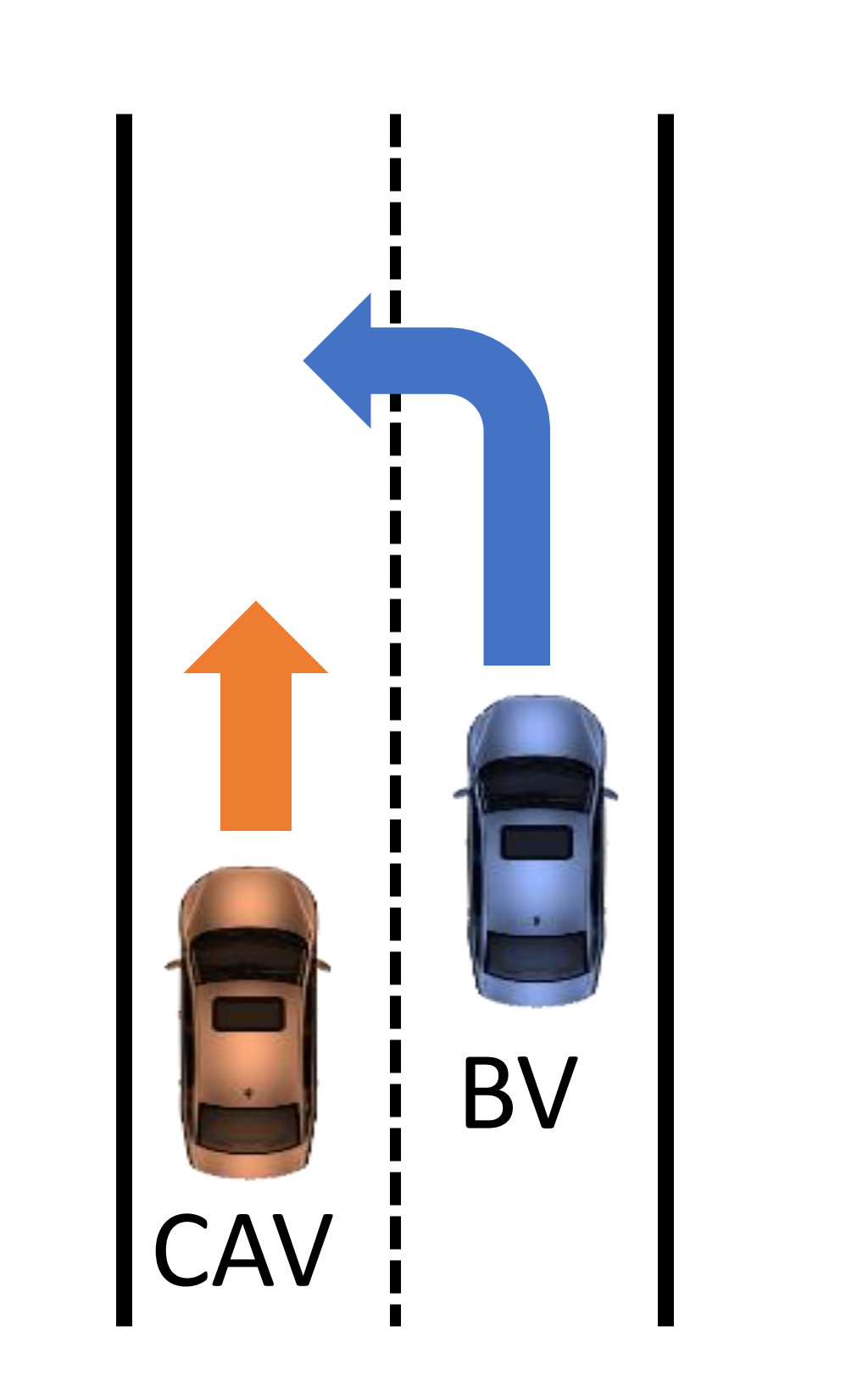}
		\centerline{(a)}
	\end{minipage}
	\begin{minipage}{.33\linewidth}
		\includegraphics[width=1\textwidth]{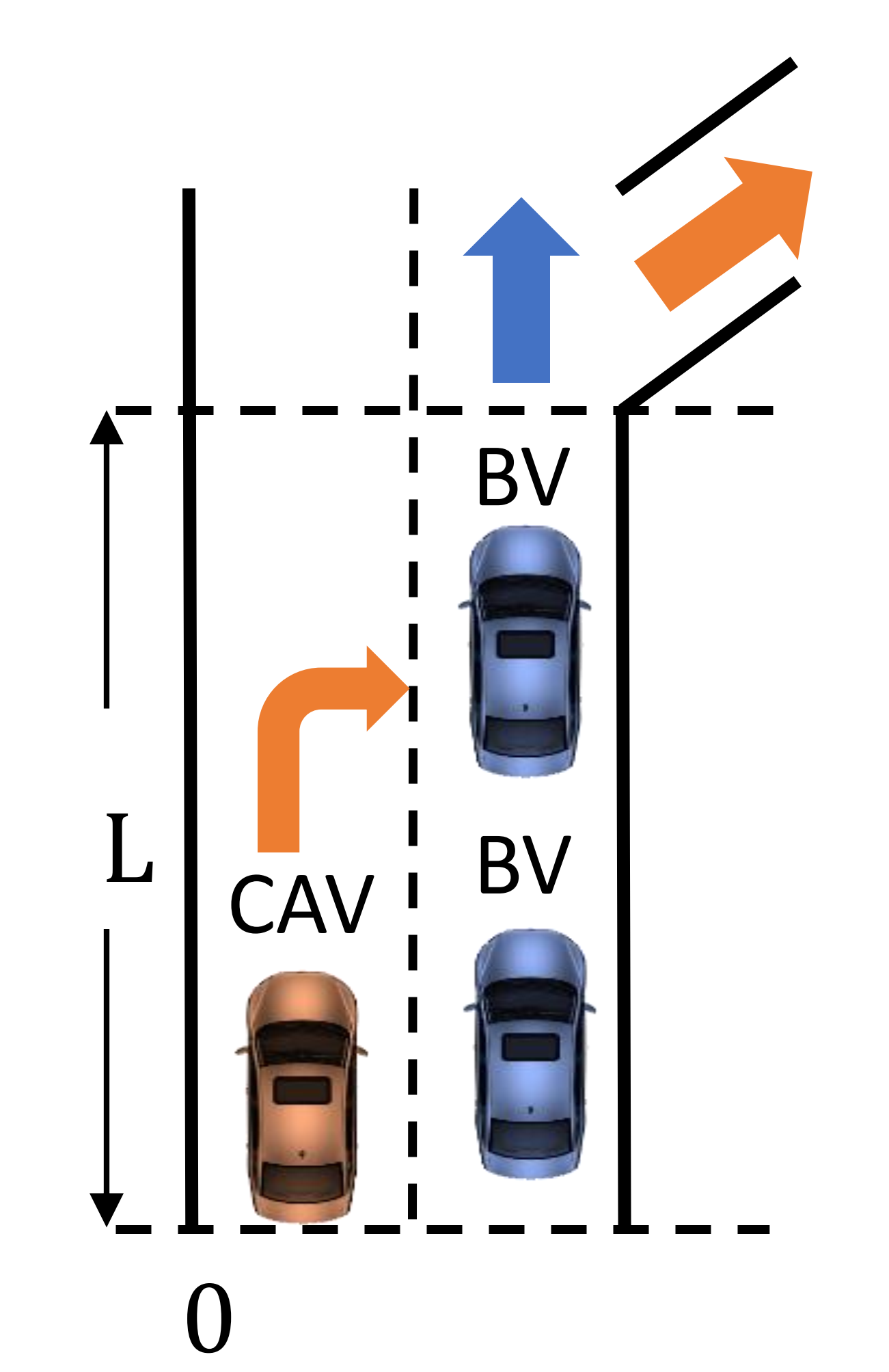}
		\centerline{(b)}
	\end{minipage}
	\begin{minipage}{.28\linewidth}
		\includegraphics[width=1\textwidth]{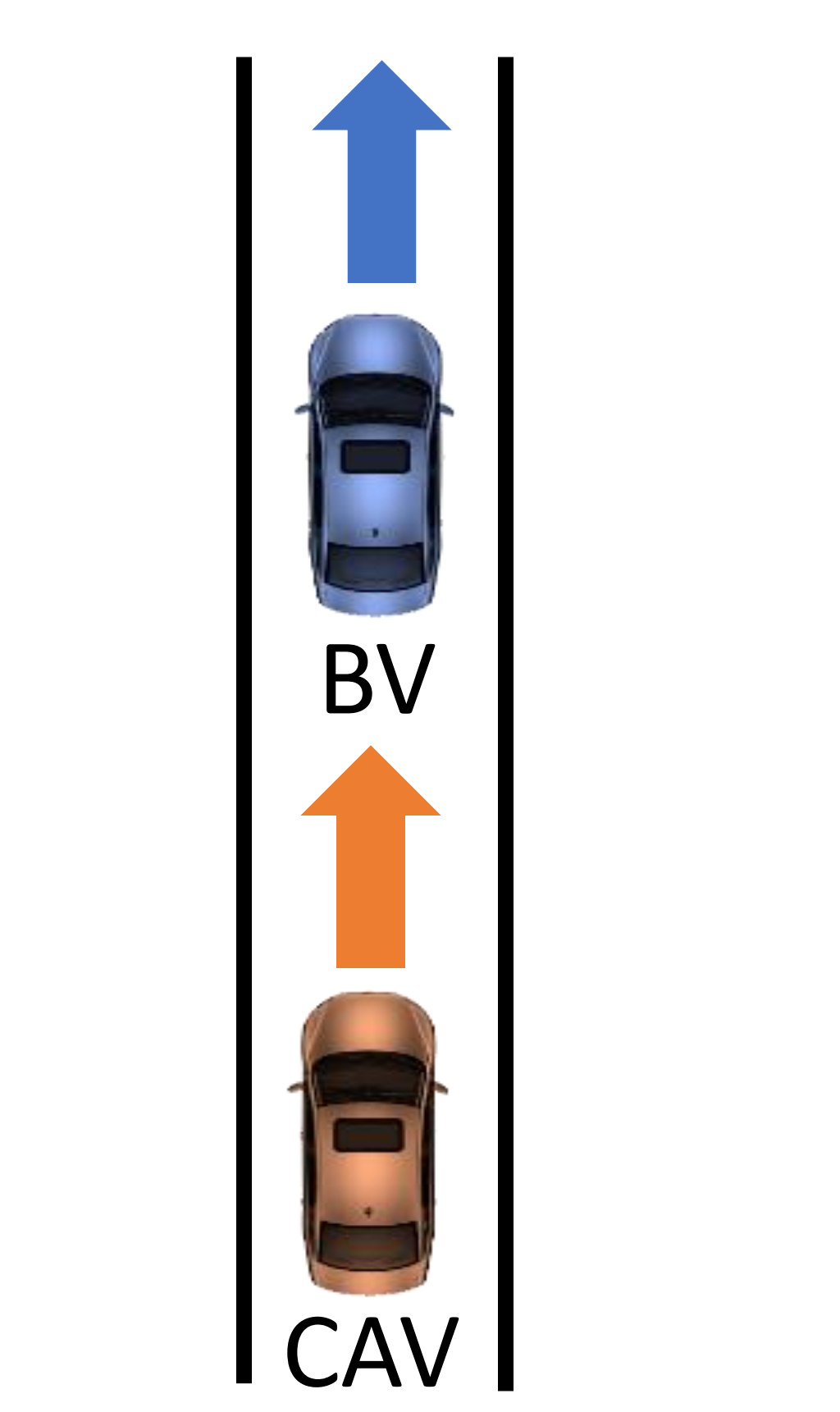}
		\centerline{(c)}
	\end{minipage}
	\caption{An illustration of the three cases: (a) cut-in, (b) highway exit, and (c) car-following. BV denotes a background vehicle.} 
	\label{fig_Case}
\end{figure}

The rest of the paper is organized as follows. For the convenience of readers, Section II briefly revisits the proposed method in Part I. Section III studies the cut-in case for safety evaluation. Section IV studies the highway exit case for functionality evaluation. In Section V, RL-enhanced method is developed for the high-dimensional car-following case. The major advantages and limitations of the method are discussed in Section VI. Finally, Section VII concludes the paper.

\section{Revisit the Proposed Method in Part I}
For the convenience of readers, in this section, we briefly revisit the proposed method in Part I \cite{feng2019testing} including problem formulation, library generation, and CAV evaluation. Notations of related variables are listed in Table \ref{tab_notation}. 

\linespread{1.2}
\begin{table}
	\centering
	\footnotesize
	\setlength{\abovecaptionskip}{1pt}
	\setlength{\belowcaptionskip}{3pt}	
	\caption{Notations of variables. }
	\label{tab_notation}
	\begin{tabular}{m{1.5cm}<{\centering}p{6.5cm}}
		\hline
		\multicolumn{1}{c}{\bfseries Variable } &   \multicolumn{1}{c}{ \bfseries Notation}   \\ \hline
		$\theta$  & \multicolumn{1}{m{6.5cm}}{Pre-determined parameters of scenarios in the operational design domain.}\\ \hline
		$x$  & \multicolumn{1}{m{6.5cm}}{Decision variables of testing scenarios.}\\ \hline
		$A$ & \multicolumn{1}{m{6.5cm}}{Event of interest (e.g., accident) with a CAV model.}\\ \hline
		$S$ & \multicolumn{1}{m{6.5cm}}{Event of interest (e.g., accident) with a surrogate model.}\\ \hline
		$\B{X}$ & \multicolumn{1}{m{6.5cm}}{Feasible set of the decision variables.} \\\hline
		$\Phi$ & \multicolumn{1}{m{6.5cm}}{Set of decision variable vector of critical testing scenarios.} \\\hline
		$V(x|\theta)$ & \multicolumn{1}{m{6.5cm}}{Criticality value of a scenario determined by $x$ and $\theta$.} \\ \hline
		$N(\B{X}), N(\Phi)$ & \multicolumn{1}{m{6.5cm}}{Total number of  scenarios of the set $\B{X}$, $\Phi$.} \\ \hline
		$\bar{P}(x_i | \theta)$ &\multicolumn{1}{m{6.5cm}}{Probability of sampling the scenario $x_i$ in the generated library with pre-determined parameters $\theta$.} \\ \hline
		$\epsilon$ & \multicolumn{1}{m{6.5cm}}{Exploration probability of $\epsilon$-greedy sampling policy.} \\\hline
		$\hat{P}(A|\theta)$ &\multicolumn{1}{m{6.5cm}}{Estimated probability of the event $A$ with pre-determined parameters $\theta$.} \\ \hline 
		$n$ &\multicolumn{1}{m{6.5cm}}{Total number of sampled testing scenarios.} \\ \hline
		$\color{black}R, \dot{R}$ & \multicolumn{1}{m{6.5cm}}{\color{black}Range and range rate at the cut-in moment between the background vehicle and test CAV.} \\\hline
		$\color{black}R(t), \dot{R}(t)$ & \multicolumn{1}{m{6.5cm}}{\color{black}Range and range rate at time $t$ between the background vehicle and test CAV.} \\\hline
		$\color{black}\omega$ & \multicolumn{1}{m{6.5cm}}{\color{black}Weight parameter.} \\\hline
		$\color{black}d(x, \Omega)$ & \multicolumn{1}{m{6.5cm}}{\color{black}Normalized distance between scenario $x$ and a high exposure frequency zone $\Omega$.} \\\hline
		$\color{black}W$ & \multicolumn{1}{m{6.5cm}}{\color{black}Normalization factor.} \\\hline
	\end{tabular}
\end{table}
\linespread{1.0}

\subsection{Problem Formulation}
The goal of the proposed method is to generate a set of critical scenarios, which can be used to evaluate CAVs for certain performance indices. If an event of interest with CAVs is denoted as $A$ (e.g., accident event), the performance of CAVs can be quantitatively evaluated by its occurrence probability (e.g., accident rate):
\begin{eqnarray}
\label{eq_PA1}
P(A|\theta) = \sum_{x\in\B{X}} P(A|x,\theta)P(x|\theta),
\end{eqnarray}
where $x$ denotes decision variables of scenarios, $\B{X}$ denotes the feasible set determined by the operational design domain (ODD), and $\theta$ denotes the pre-determined parameters under the ODD, such as road type, number of lanes, weather conditions, etc.

Essentially the on-road test is to measure $P(A|\theta)$ in a naturalistic driving environment as
\begin{eqnarray}
\label{eq_MCM}
P(A|\theta) &&\approx \frac{1}{n} \sum_{i=1}^{n}P(A|x_i,\theta), x_i \sim P(x|\theta), \\
&&\approx \frac{m}{n}, \nonumber
\end{eqnarray}
where $n$ denotes the total number of tests and $m$ denotes the number of occurrence for event $A$. Here the scenario variables $x_i$ follow the distribution from naturalistic driving data (NDD), i.e., $x_i \sim P(x|\theta)$. In this paper, either on-road tests or simulation of on-road tests with naturalistic driving environment is referred as NDD evaluation. Because the event $A$ in NDD evaluation is usually rare, the required number of tests is intolerably large with reasonable precision \cite{kalra2016driving}.

To mitigate this issue, importance sampling techniques were applied by \cite{zhao2017accelerated} as
\begin{eqnarray}
\label{eq_IS}
P(A|\theta) &&= \sum_{x\in\B{X}} \frac{P(A|x, \theta)P(x|\theta)}{q(x)}q(x), \\
&&\approx \frac{1}{n} \sum_{i=1}^{n}\frac{P(A|x_i, \theta)P(x_i|\theta)}{q(x_i)}, x_i \sim q(x), \nonumber
\end{eqnarray}
where $q(x)$ denotes an importance function. According to importance sampling theory \cite{owen2013monte}, to obtain a certain estimation precision, the required number of tests ($n$) is determined by the importance function, and how to construct a proper importance function remains a challenge. A properly constructed importance function is essential to achieving the goal of library generation. 

\subsection{Library Generation}
The basic idea of library generation is to define the criticality of scenarios and search critical scenarios to construct the library. The criticality of a scenario is defined as
\begin{eqnarray}
\label{eq_Value}
V(x|\theta) \overset{\rm def}{=} P(S|x, \theta) P(x|\theta), 
\end{eqnarray}
where $S$ denotes the event of interest with a surrogate model (SM) of CAVs, and $P(S|x, \theta)$ is the occurrence probability of $S$ in scenario $(x, \theta)$. As shown in Eq. (\ref{eq_Value}), the scenario criticality is expressed as a combination of maneuver challenge ($P(S|x, \theta)$) and exposure frequency ($P(x|\theta)$). Since too many local optimal solutions exist in the criticality function, naive search for the critical scenarios is inefficient. To solve this issue, the multi-start optimization and seed-fill based searching method is applied, where an auxiliary objective function is designed to provide searching directions. The SM and the auxiliary objective function will be discussed case by case.

\subsection{CAV Evaluation}
After the generation of library, testing scenarios are sampled from the library with $\varepsilon$-greedy policy, and the performance index ($P(A|\theta)$) is estimated based on the testing results. A minimal number of tests is required for achieving certain estimation precision.

The sampling distribution with $\varepsilon$-greedy policy is derived as
\begin{eqnarray}
\label{eq_Prob_Sampling_2}
\bar{P}(x_i|\theta) = \left\{
\begin{array}{ll}
(1-\epsilon)V(x_i|\theta)/W, &x_i \in \Phi\\
\epsilon / (N(\B{X}) - N(\Phi)), &x_i \in \B{X} \backslash \Phi
\end{array}
\right.
\end{eqnarray}
where $N(\B{X})$ denotes the total number of feasible scenarios, $\Phi$ denotes the set of critical scenarios, the selection of $\epsilon$ is theoretically analyzed (see \textcolor{black}{Corollary 1} in Part I paper), and $W$ is a normalization factor as
\begin{eqnarray}
\label{eq_w}
W = \sum_{x_i \in \Phi} V(x_i|\theta).
\end{eqnarray}

After testing the CAV with sampled scenarios, the index $P(A|\theta)$ can be estimated as
\begin{eqnarray}
\label{eq_Id_Em}
\hat{P}(A|\theta) \overset{\rm def}{=}  \frac{1}{n} \sum_{i=1}^{n} \frac{P(x_i|\theta)  }{\bar{P}(x_i|\theta)}P(A|x_i, \theta) ,
\end{eqnarray}
where $P(A|x_i, \theta)$ is estimated by the testing results. 

The minimal number of tests is shown as follow. The testing process stops if the relative half-width of the estimation is less than a pre-determined threshold $\beta$ \cite{zhao2017accelerated}\cite{wasserman2013all}\cite{ross2017introductory} as
\begin{eqnarray}
\label{eq_confidence}
\frac{z_{\alpha}}{\hat{\mu}} Var(\hat{\mu}) \le \beta,
\end{eqnarray}
where $z_{\alpha}$ is a constant with the confidence level at $100(1-\alpha)\%$, $\hat{\mu}=\hat{P}(A|\theta)$ is the estimation of the index, and $Var(\hat{\mu}) = \sigma^2/n$ is the estimation variance, which decreases with increasing number of tests, i.e., $n$. 

\section{Cut-in Case Study}
Before we present the cut-in case study, a general implementation procedure for CAV testing is provided in Fig. \ref{fig_flowchart}. For all three case studies, we will present in the order of problem formulation, library generation, and CAV evaluation.

\begin{figure}[h!]
	\centering
	\color{black}
	\includegraphics[width=0.4\textwidth]{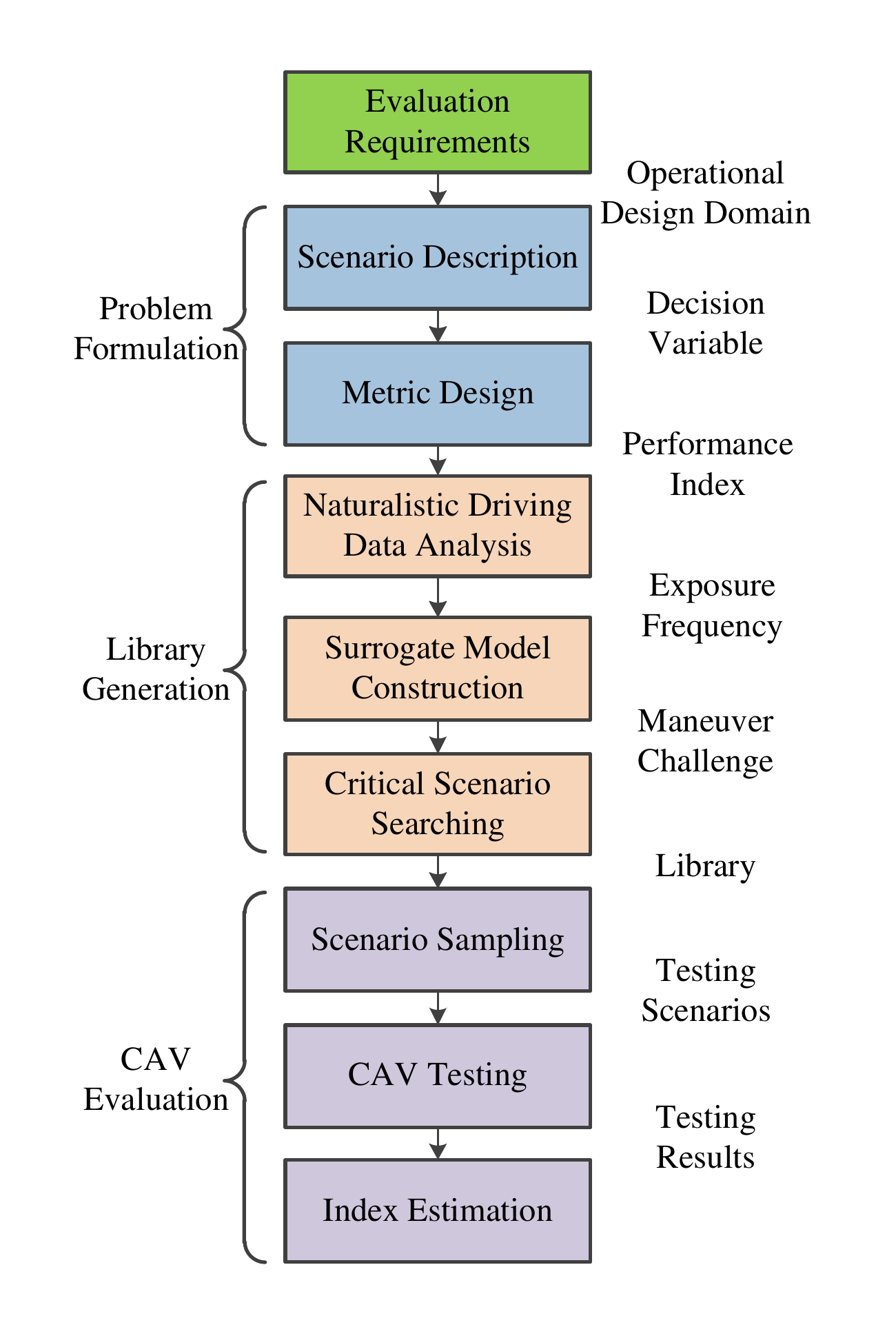}
	\caption{An illustration of the general procedure for CAV evaluation.} 
	\label{fig_flowchart}
\end{figure}

{For the cut-in case shown in Fig. \ref{fig_Case} (a), the decision variables and the performance index are formulated in Subsection III.A. In Subsection III.B, an auxiliary objective function is designed for critical scenario searching, NDD is analyzed to provide the exposure frequency, and SM is constructed to measure the maneuver frequency. With the generated library, a typical CAV is tested and evaluated in Subsection III.C.}

\subsection{Problem Formulation}
Similar to most existing studies \cite{PEGASUS}\cite{zhao2017accelerated}, the decision variables of the cut-in case are simplified as two dimensions, i.e.,
\begin{eqnarray}
x = [R, \dot{R}]^T, \nonumber
\end{eqnarray}
where $R$ and $\dot{R}$ denote the range (the longitudinal distance between the rear bumper of the preceding vehicle and the front bumper of the ego vehicle) and range rate (the longitudinal speed difference) at the cut-in moment. For this simplification, the background vehicle (BV) is assumed to keep constant velocity after the cut-in behavior, and parameters of road environments are pre-determined. All these pre-determined parameters are denoted as $\theta$.
The accident rate is utilized to measure the safety performance of CAVs in the cut-in case. The on-road test method is simulated to estimate the accident rate as a baseline. 

\subsection{Library Generation}

To implement the library generation method, three questions need to be answered specifically, i.e., auxiliary objective function design, NDD analysis, and SM construction. 

\subsubsection{Auxiliary Objective Function Design}
\label{ss_ofd}
To provide searching directions for critical scenarios, an auxiliary objective function is designed as the combination of estimated maneuver challenge and exposure frequency. 

The maneuver challenge is estimated by {\bfseries m}inimal {\bfseries n}ormalized {\bfseries p}ositive {\bfseries e}nhanced {\bfseries t}ime-{\bfseries t}o-{\bfseries c}ollision (mnpETTC). As discussed in \cite{vogel2003comparison}\cite{chen2016comparison}, ETTC is one of most widely used indices of safety evaluation for varying velocity scenarios, and it is defined as
\begin{eqnarray}
\label{eq_ETTC}
ETTC(t) = \frac{-\dot{R}(t)-\sqrt{\dot{R}^2(t)-2u_r(t)R(t)}}{u_r(t)},
\end{eqnarray}
where $R(t)$ and $\dot{R}(t)$ are the range and range rate at time $t$,  and $u_r(t)$ is the relative acceleration. Values of ETTC for different scenarios can be obtained by simulations. To make the index comparable, a normalization factor is applied, denoted as $U_I$, which is calibrated by NDD analysis. Negative values of ETTC, which denote safe situations, will be set to one. Then the minimal normalized positive ETTC (mnpETTC) can be calculated as
\begin{eqnarray}
\label{eq_mnpETTC}
mnpETTC(t) = \min_t npETTC(t),
\end{eqnarray}
where
\begin{eqnarray}
\label{eq_npETTC}
npETTC(t) = \left\{ 
\begin{matrix}
ETTC(t) / U_I, &ETTC(t)\ge0\\
1, & ETTC(t) < 0
\end{matrix}.
	\right.
\end{eqnarray}

The exposure frequency of a scenario is estimated by the distance between the scenario and a common set  (i.e., scenarios with high exposure frequency). The common set is determined by NDD analysis, and the distance is defined as
\begin{eqnarray}
\label{eq_d}
d(x,\Omega) &&= \min_{y \in \Omega} d(x,y), \\
&& = \min_{y \in \Omega} \sqrt{\frac{1}{m_d} \sum_{i=1}^{m_d} \frac{(x_i-y_i)^2}{U^2_{F,i}} }, \nonumber
\end{eqnarray}
where $\Omega$ denotes the common set, $m_d$ is the dimension of the decision variables, and $U_{F,i}$ is the normalization factor for the $i$-th dimension, which is calibrated by NDD analysis.

The auxiliary objective function for safety evaluation in the cut-in case is formulated as
\begin{eqnarray}
\label{eq_obj_cutin}
\min_x J(x) = \min_x \left( mnpETTC(x) + w \times d(x, \Omega)   \right),
\end{eqnarray}
where $w \in (0,1]$ is a balance weight. Note the goal of the auxiliary objective function is to provide searching directions, so certain roughness (e.g., caused by $w$) is reasonable and acceptable. The values of the parameters for the auxiliary objective function are listed in Table \ref{tab_cutin}.

\linespread{1.2}
\begin{table}[h!]
	\centering
	\footnotesize
	\setlength{\abovecaptionskip}{1pt}
	\setlength{\belowcaptionskip}{3pt}	
	\caption{The auxiliary objective function parameters in the cut-in case. }
	\label{tab_cutin}
	\begin{tabular}{cccc}
		\hline
		\multicolumn{1}{c}{\bfseries Parameter } &   \multicolumn{1}{c}{ \bfseries Value} &\multicolumn{1}{c}{\bfseries Parameter } &   \multicolumn{1}{c}{ \bfseries Value}  \\ \hline
		$m_d$ & 2 &$U_I$ &100 \\ 
		$U_{F,1}$ &18 &$U_{F,2}$ &20\\ 	
		$w$ &1.0 &- &-\\ \hline
		
	\end{tabular}
\end{table}
\linespread{1.0}

\subsubsection{NDD Analysis}
\label{ss_NDD}
NDD is analyzed to provide exposure frequency measurement, determine parameters of the auxiliary objective function, and calibrate the SM. 

The NDD from the Safety Pilot Model Deployment (SPMD) program at University of Michigan \cite{bezzina2014safety} is utilized for the cut-in case. The SPMD database is one of the largest databases in the world that recorded naturalistic driving behaviors over 34.9 million travel miles from 2,842 equipped vehicles in Ann Arbor, Michigan. In the database, there are 98 sedans equipped with the data acquisition system and MobilEye, which enables the measurement of both longitudinal and lateral distances between the ego vehicle, preceding vehicles, and lane markings, at a frequency of 10 Hz. By analyzing these lateral distances, cut-in events can be identified. For each cut-in event, the cut-in moment is determined by the time instant when the preceding vehicle crosses the lane marking, and the range and range rate at that moment are recorded for the NDD analysis. In this paper, the following query criteria \cite{zhao2017accelerated}\cite{gong2018evaluation} are designed to extract all cut-in events from the database: (a) the vehicles' speeds at the cut-in time belong to $(2m/s,40m/s)$; (b) the range at the cut-in time belongs to $(0.1m,90m)$. As a result, 414,770 qualified cut-in events are successfully obtained. Fig. \ref{fig_Map} shows the location distribution of the events. The exposure frequency distribution (i.e., $P(x|\theta)$) is shown in Fig. \ref{fig_Px}, where brighter color denotes higher exposure frequency. The range and range rate are discretised by $2m$ and $0.4m/s$ respectively. The NDD evaluation method is equivalently sampling testing scenarios from this probability distribution.

\begin{figure}[h!]
	\centering
	\includegraphics[width=0.45\textwidth]{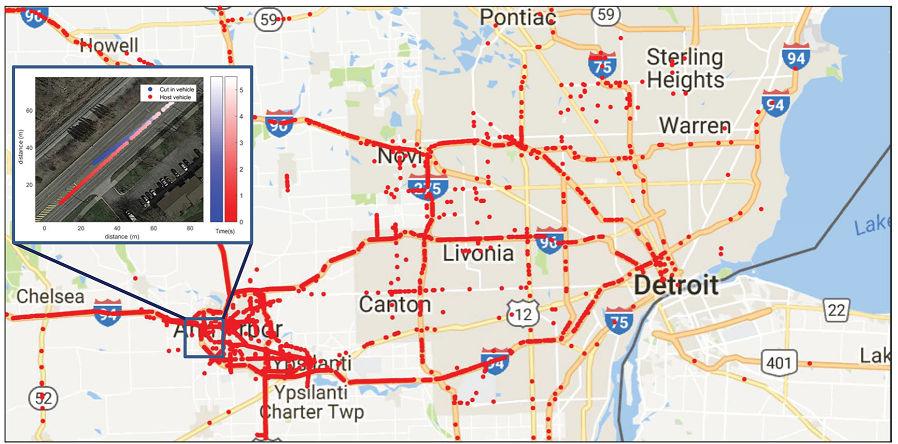}
	\caption{An illustration of the cut-in event distribution in the Safety Pilot Database \cite{gong2018evaluation}. }
	\label{fig_Map}
\end{figure}

\begin{figure}[h!]
	\centering
	\includegraphics[width=0.45\textwidth]{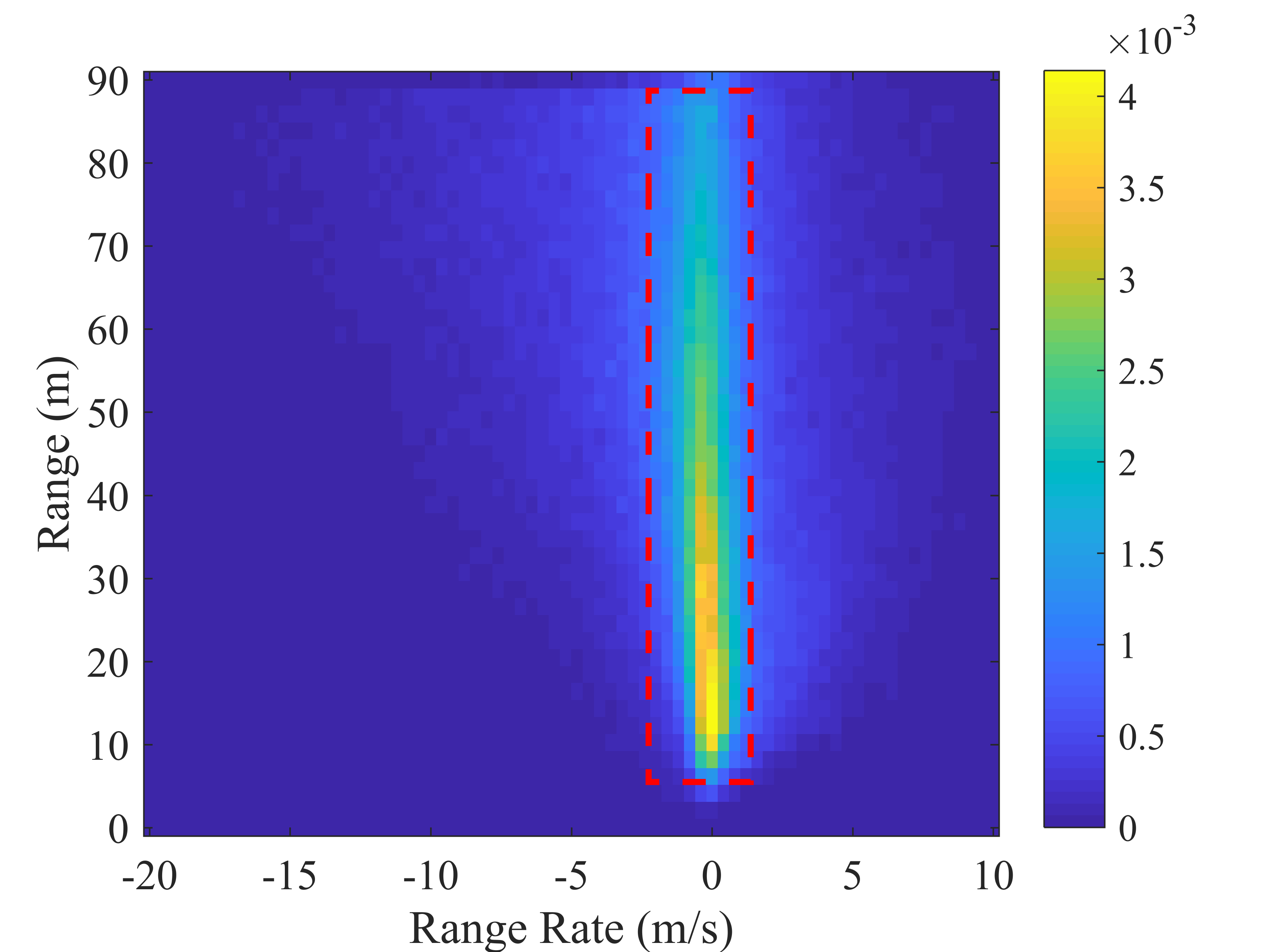}
	\caption{The exposure frequency (i.e., $P(x|\theta)$) of the cut-in range and range rate. The dashed red rectangle denotes the boundary of the common set (i.e., $\Omega$).} 
	\label{fig_Px}
\end{figure}

We now discuss how to determine the parameters in the auxiliary objective function. First, the common set $\Omega$ can be determined by finding a minimal rectangle or hyper-rectangle of scenarios with high probabilities (i.e., $P(x|\theta)>10^{-3}$). As shown in Fig. \ref{fig_Px}, the dashed red rectangle denotes the boundaries of the common set in the cut-in case (i.e., $[6, 88]$ for range and $[-2.4, 1.2]$ for range rate). For more complex scenarios, the common set can be further simplified as the most frequent scenario, e.g., $R=14, \dot{R}=0$. Second, the normalization factors are determined by the maximal distance between scenarios and the common set. For example, the maximal range rate between scenarios and the common set is smaller than 18, so the normalization factor of the range rate is set to 18. 

\subsubsection{Surrogate Model Construction}
\label{ss_SM}

SM construction is a very important step in the library generation process. It represents what we know about the generic features of CAVs. The ``generic features'' capture basic behavior of a CAV, e.g., keep safe distances with surrounding vehicles. Similar to human drivers, where different drivers have different driving habits, generic features exist among all drivers. An ideal SM should be calibrated from actual CAV driving data similar to human driving model calibration \cite{ranney1994models}. At the current stage, however, there is very little open CAV data available for public research.  Therefore, we propose to calibrate the SM based on the human driving data, i.e., NDD. 

\begin{figure}[h!]
	\centering
	\includegraphics[width=0.45\textwidth]{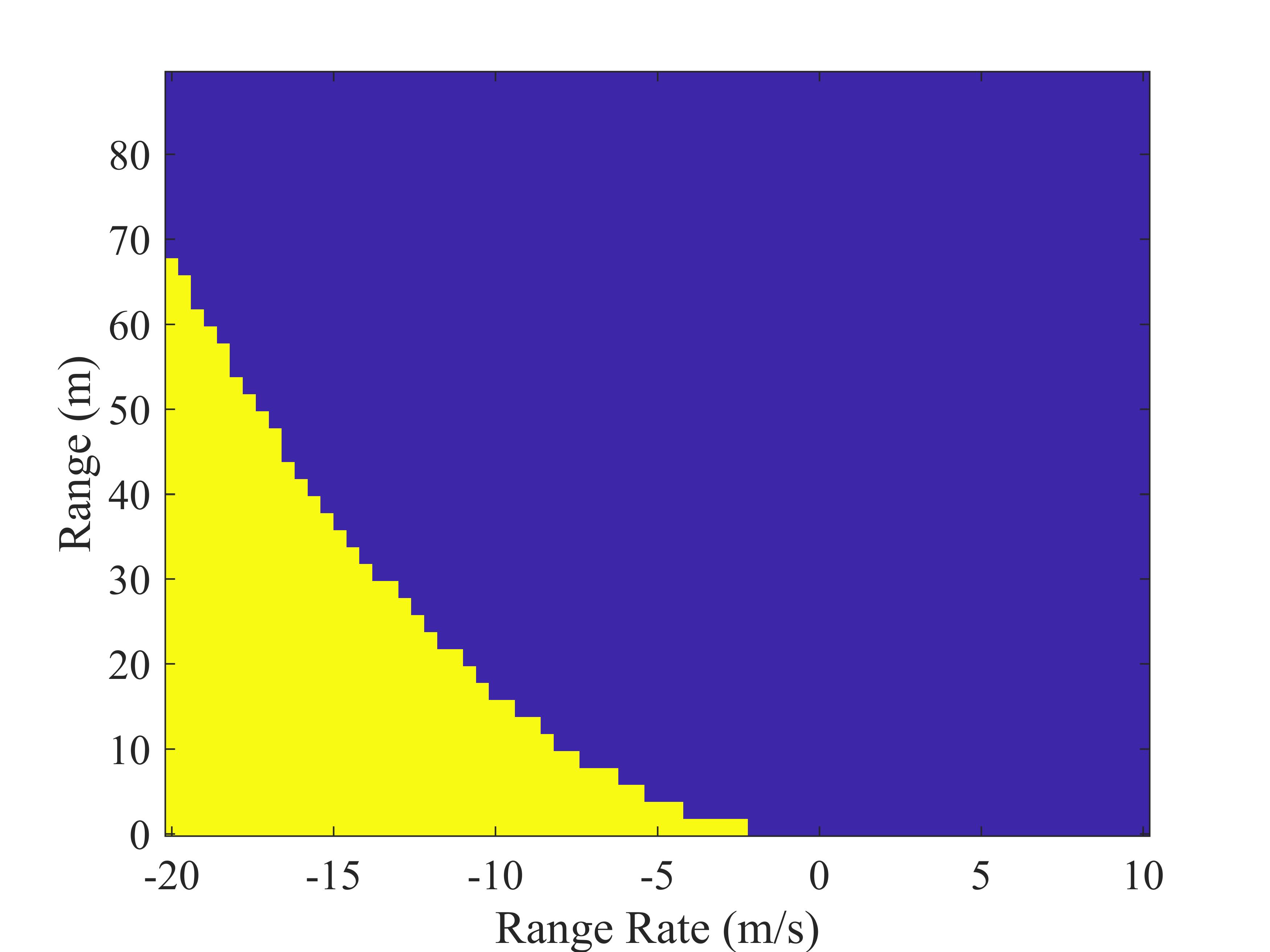}
	\caption{The safety performance of the SM, where the SM has accidents in scenarios of the yellow region.} 
	\label{fig_IDM}
\end{figure}
In this case study, a calibrated intelligent driving model (IDM) \cite{ro2018formal} is selected as the SM for the car-following behaviors after the cut-in event:
\begin{eqnarray}
\label{eq_SM_ct}
&&u(k+1) = \\
&&\alpha_{\text{IDM}} \left( 1 - \left(\frac{v(k)}{\beta_{\text{IDM}}}\right)^{c_{\text{IDM}}}  
- \left( \frac{s_{\text{IDM}}(v(k), \dot{R}(k))}{R(k)-L_{\text{IDM}}} \right)^2
\right), \nonumber
\end{eqnarray}
where $k$ denotes the discrete time step, $u$ denotes the acceleration, $\alpha_{\text{IDM}}$, $\beta_{\text{IDM}}$, $c_{\text{IDM}}$,  $L_{\text{IDM}}$ are constant parameters, and
\begin{eqnarray}
s_{\text{IDM}}(v(k), \dot{R}(k)) = s_0 + v(k)T + \frac{v(k)\dot{R}(k)}{2\sqrt{\alpha_{\text{IDM}} b_{\text{IDM}}}},
\end{eqnarray}
where $s_0$, $b_{\text{IDM}}$, and $T$ are constant parameters.
Similar to \cite{hamdar2008existing}, the constraints of acceleration and velocity are added to make the model more practical (i.e., model accident-prone behaviors) as
\begin{eqnarray}
v_{min} \le v \le v_{max}, a_{min} \le u \le a_{max}.
\end{eqnarray}
An accident event is defined as the vehicle range is smaller than a threshold, i.e., $R(t)<d_{acci}$.
The calibrated values are listed in Table \ref{tab_cutin_SM}.  Fig. \ref{fig_IDM} shows the safety performance of the selected SM, where the SM has accidents in scenarios of the yellow region.
\linespread{1.2}
\begin{table}[h!]
	\centering
	\footnotesize
	\setlength{\abovecaptionskip}{1pt}
	\setlength{\belowcaptionskip}{3pt}	
	\caption{The surrogate model parameters in the cut-in case. }
	\label{tab_cutin_SM}
	\begin{tabular}{cccc}
		\hline
		 \multicolumn{1}{c}{\bfseries Parameter } &   \multicolumn{1}{c}{ \bfseries Value} &\multicolumn{1}{c}{\bfseries Parameter } &   \multicolumn{1}{c}{ \bfseries Value}  \\ \hline
		$v_{max}$ &40 $m/s$	&$v_{min}$ &2 $m/s$ \\
		$a_{min}$ &-4 $m^2/s$ &$a_{max}$ &2 $m^2/s$ \\ 
		$\alpha_{\text{IDM}}$ &2 &$\beta_{\text{IDM}}$ &18\\
		$c_{\text{IDM}}$ &4 &$s_0$ &2\\
		$L_{\text{IDM}}$ &4 &$T$ &1\\
		$b_{\text{IDM}}$ &3 &$d_{acci}$ &1 $m$\\	 
		\hline
		
	\end{tabular}
\end{table}
\linespread{1.0}

\subsubsection{Library Generation}
\label{ss_LG}
The optimization and seed-fill based method proposed in Part I of this study \cite{feng2019testing} is applied to search for critical scenarios and construct the library.
In this case, 50 points are uniformly sampled as the initial starting points. As discussed in \textcolor{black}{Corollary 2} in Part I, the threshold of critical scenarios is determined as 
\begin{eqnarray}
\label{eq_thresh_VR_cutin}
\gamma = \frac{m}{N(\B{X})-N(\Phi)} \approx \frac{m}{N(\B{X})} = 2.9 \times 10^{-4},
\end{eqnarray}
where $m=1$, and $N(\B{X})=47 \times 76 = 3,420$. The discretization intervals of the range and range rate are $2m$ and $0.4m/s$, and the boundaries of the range and range rate are $(0,90]$ and $[-20,10]$ respectively. 

\begin{figure}[h!]
	\centering
	\includegraphics[width=0.45\textwidth]{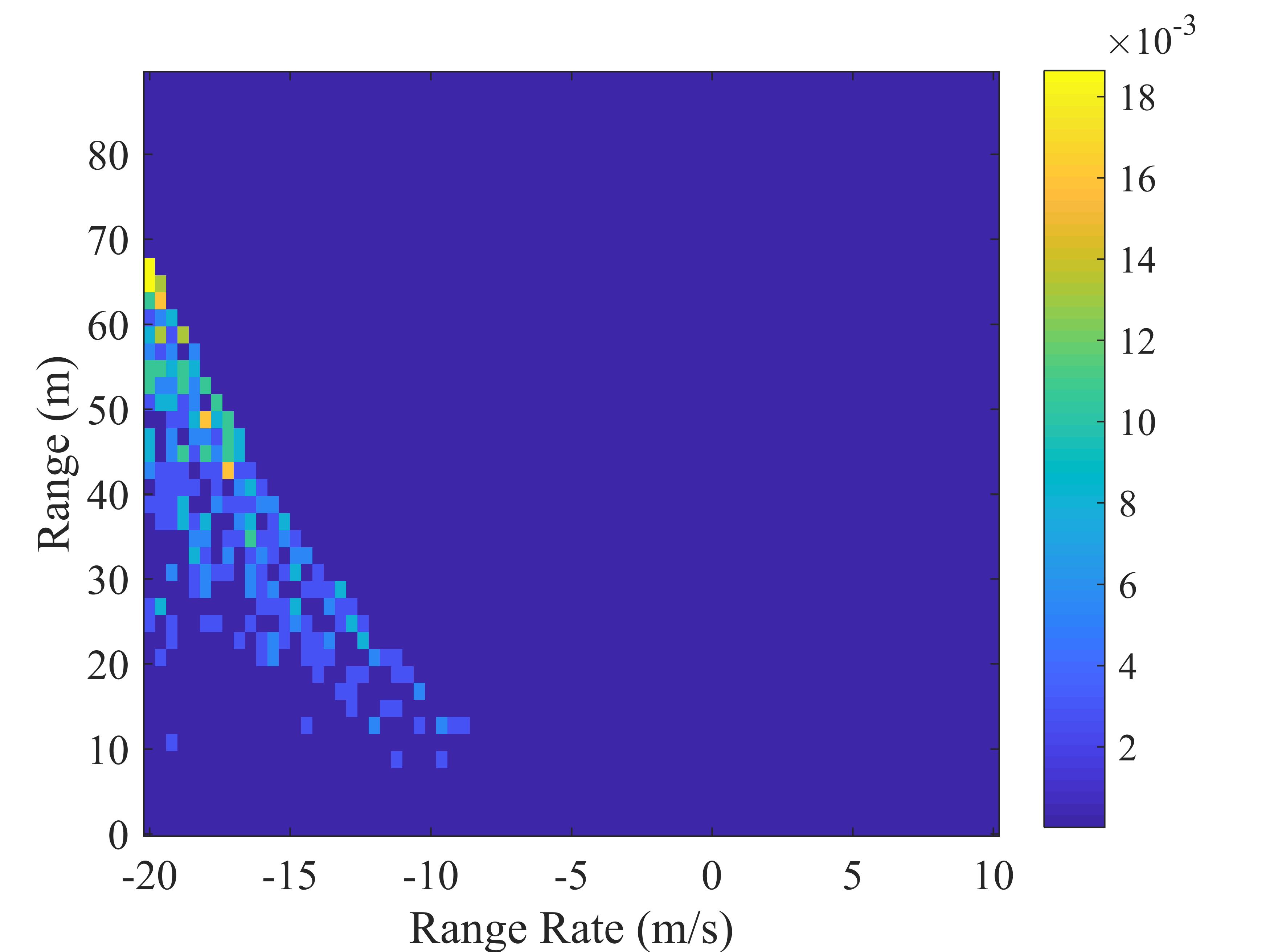}
	\caption{The generated library of the cut-in case for safety evaluation. The color denotes the testing probability of scenarios.} 
	\label{fig_lib}
\end{figure}

Fig. \ref{fig_lib} shows the obtained probability distribution after the library generation process. The color denotes the probability of a scenario, which is calculated by the normalized criticality. Compared with Fig. \ref{fig_Px}, where only exposure frequency is considered, the new distribution considers both the maneuver challenge and exposure frequency of scenarios. All scenarios with criticality values exceeding the threshold $\gamma$ will be included in the library. 
In this case, the generated library contains a total number of 184 scenarios, which is about 5.38\% of all scenarios.

\subsection{CAV Evaluation}
In this step, a specific CAV is evaluated with the generated library. For field implementations, a real CAV should be tested. In this paper, a simulated CAV model is used as a proof of concept to validate the proposed method.

The simulation model used in \cite{zhao2017accelerated}, which combines adaptive cruise control and autonomous emergency braking functions, is adopted in this case. The NDD evaluation method is applied as the baseline, where testing scenarios are sampled from the NDD distribution in Fig. \ref{fig_Px}. For the proposed method, testing scenarios are sampled from the generated library in Fig. \ref{fig_lib}. The $\epsilon$-greedy sampling policy is applied with $\epsilon=0.05$, which is determined according to \textcolor{black}{Corollary 1} in Part I paper \cite{feng2019testing}. The chosen CAV model is tested in the sampled scenarios, and the accident event is recorded.

\begin{figure}[h!]
	\centering
	\begin{minipage}{.9\linewidth}
		\includegraphics[width=1\textwidth]{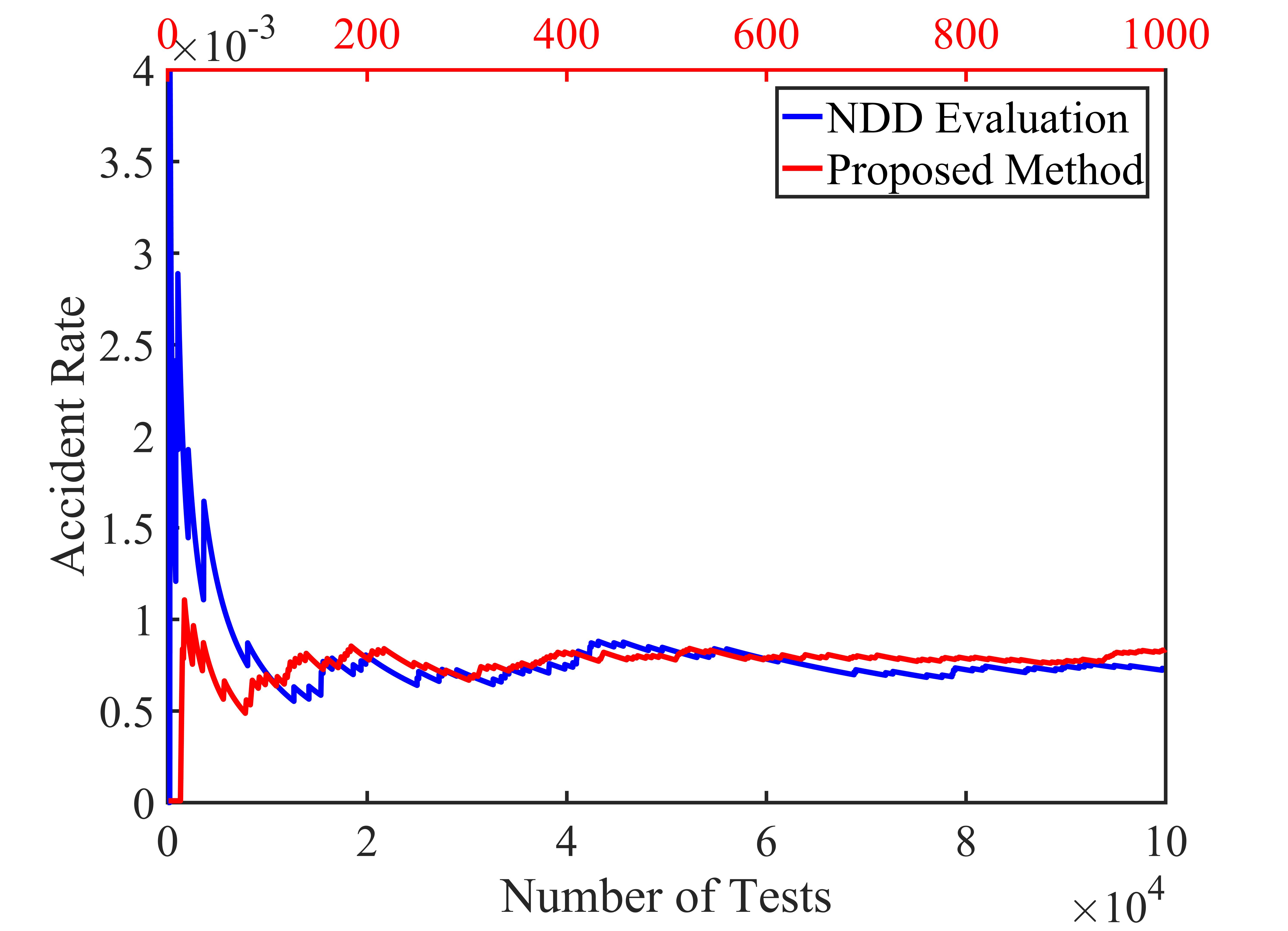}
		\centerline{(a)}
	\end{minipage}
	\begin{minipage}{.9\linewidth}
		\includegraphics[width=1\textwidth]{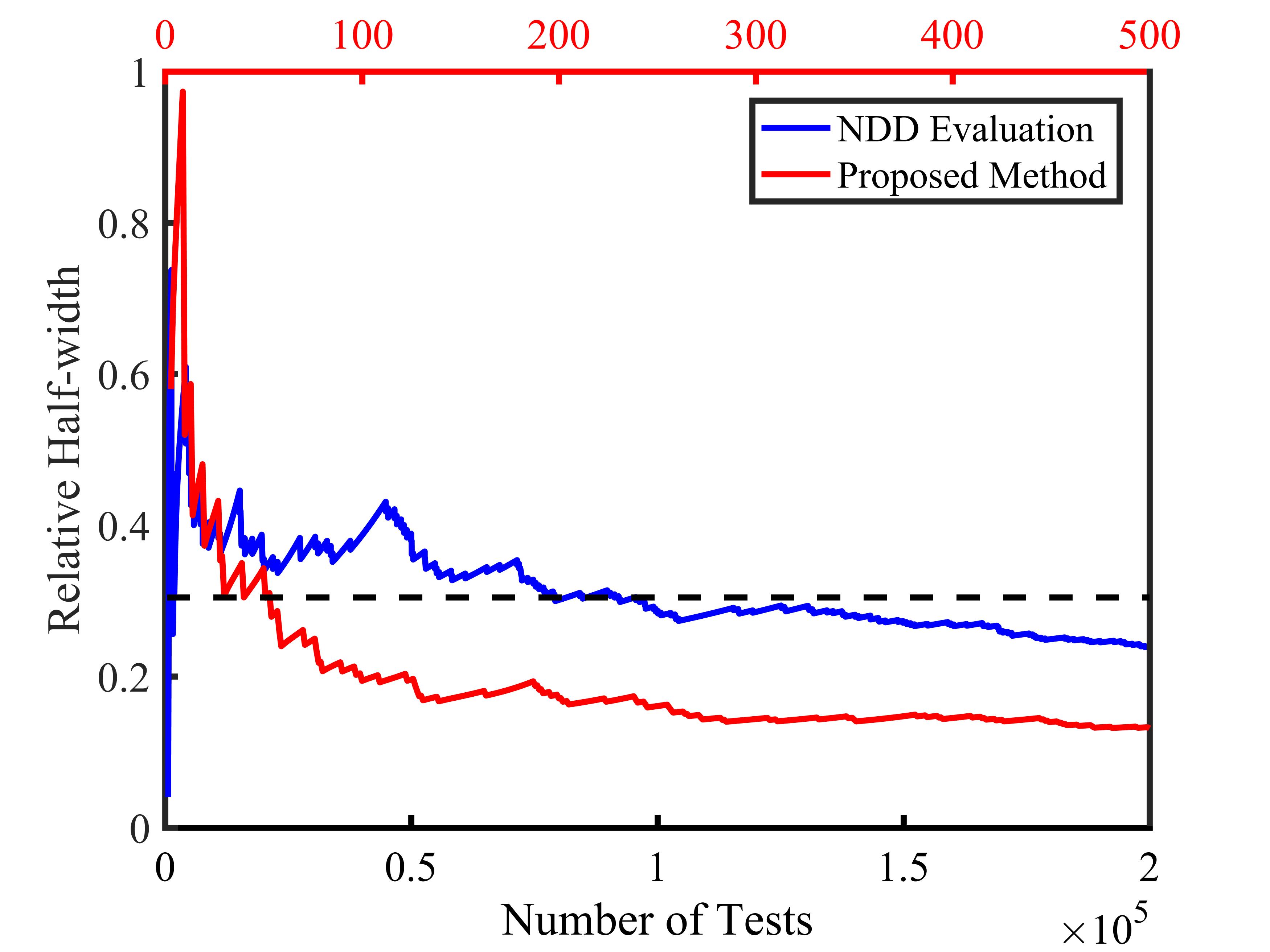}
		\centerline{(b)}
	\end{minipage}
	\caption{Results of the cut-in case: (a) estimation results of the accident rate; (b) relative half-width of the estimation results.} 
	\label{fig_result_cutin}
\end{figure}

Fig. \ref{fig_result_cutin} shows the comparison of the two evaluation methods. The blue line represents the results of the NDD evaluation method, and the bottom $x$-axis represents its number of tests. The red line represents the results of the proposed method, and the top $x$-axis represents its number of tests. As shown in Fig. \ref{fig_result_cutin}, both methods can obtain accurate estimation of the accident rate for a pre-determined relative half-width. In this paper, we set  $\alpha=0.95$ and $\beta=0.3$. Fig. \ref{fig_result_cutin} (b) shows that the proposed method achieves this confidence level after 51 tests, while the NDD evaluation  method needs $9.63 \times 10^4$ tests. The proposed method is about 1,888 times faster than the NDD evaluation method. 

\section{Highway Exit Case Study}
As shown in Fig. \ref{fig_Case} (b), for the highway exit case, the test CAV needs to make a lane change to the right and exits the highway within a certain distance. Compared with safety evaluation, the major difference of functionality evaluation lies in the design of auxiliary objective function for critical scenario searching. To this end, several new concepts are proposed, i.e., task, task solution, task solution difficulty, and task difficulty. Based on these concepts, an auxiliary objective function is designed.

\subsection{Problem Formulation}
The decision variables of the highway exit scenario include initial states of the CAV,  number of BVs, and trajectories of each BV, which is high-dimensional. To simplify the problem and focus on the functionality evaluation, the initial position and velocity of the CAV are pre-determined as $p_0$ and $v_0$, and only two BVs is considered. The two BVs will keep their initial velocity unless the distance between them is less than a threshold $d_{cf}$. In such case, the following BV will change its speed to be the same as the leading BV. As a result, the decision variables are formulated as
\begin{eqnarray}
\label{eq_x_highway}
x = [p_{0,1}, v_{0,1}, p_{0,2}, v_{0,2}]^T,
\end{eqnarray}
where $p_{0, i}$, $v_{0, i}$ denote the initial position and velocity of the $i$-th BV.
The discrete interval of time and position is chosen as $\Delta t$ and $\Delta p$ respectively. In this case study, the initial problem settings are summarized in Table \ref{tab_highway_PF}. 

\linespread{1.2}
\begin{table}[h!]
	\centering
	\footnotesize
	\setlength{\abovecaptionskip}{1pt}
	\setlength{\belowcaptionskip}{3pt}	
	\caption{The initial problem settings for the highway exit case. }
	\label{tab_highway_PF}
	\begin{tabular}{cccc}
		\hline
		\multicolumn{1}{c}{\bfseries Parameter } &   \multicolumn{1}{c}{ \bfseries Value} &\multicolumn{1}{c}{\bfseries Parameter } &   \multicolumn{1}{c}{ \bfseries Value}  \\ \hline
		$p_0$ &0 $m$ &$v_0$ & 30 $m/s$ \\ 
		$d_{cf}$ &2 $m$	&$p_{0,i}$ &$[-100, 200]$ \\
		$v_{0,i}$ &[20,40]	&$\Delta t$ &0.1 $s$ \\
		$\Delta p$ &5 $m$ &-&-\\ \hline
	\end{tabular}
\end{table}
\linespread{1.0}

\subsection{Library Generation}
The library generation methods are the same as those presented in the cut-in case. To make the paper concise, only the auxiliary objective function design for the functionality evaluation will be elaborated.

\subsubsection{Auxiliary Objective Function Design}
Similar to the cut-in case, the auxiliary objective function is composed of estimated exposure frequency and maneuver challenge. To evaluate the maneuver challenge for generic functionality, four new concepts are proposed, i.e., task, task solution, task solution difficulty, and task difficulty. The ``task'' is defined based on the functionality, e.g., exit from the highway. The ``task solution'' $f$ denotes a feasible CAV trajectory to complete the task, i.e., $f \in \B{F}$. $\B{F}$ represents the feasible set of CAV trajectories. The ``task solution difficulty'' denotes the difficulty in completing the task solution, i.e., $W(f)$, where $W(f)$ is negative and larger $W(f)$ denotes higher difficulty. Finally, the ``task difficulty''  denotes the difficulty of the task, which can be evaluated by the summation of all task solution difficulties as
\begin{eqnarray}
\label{eq_difficulty}
M_f(x) = \sum_{f\in\B{F}}W(f).
\end{eqnarray}
This definition can represent both the difficulty in finding a feasible solution to the task and completing that solution. 

\begin{figure}[h!]
	\centering
	\includegraphics[width=0.45\textwidth]{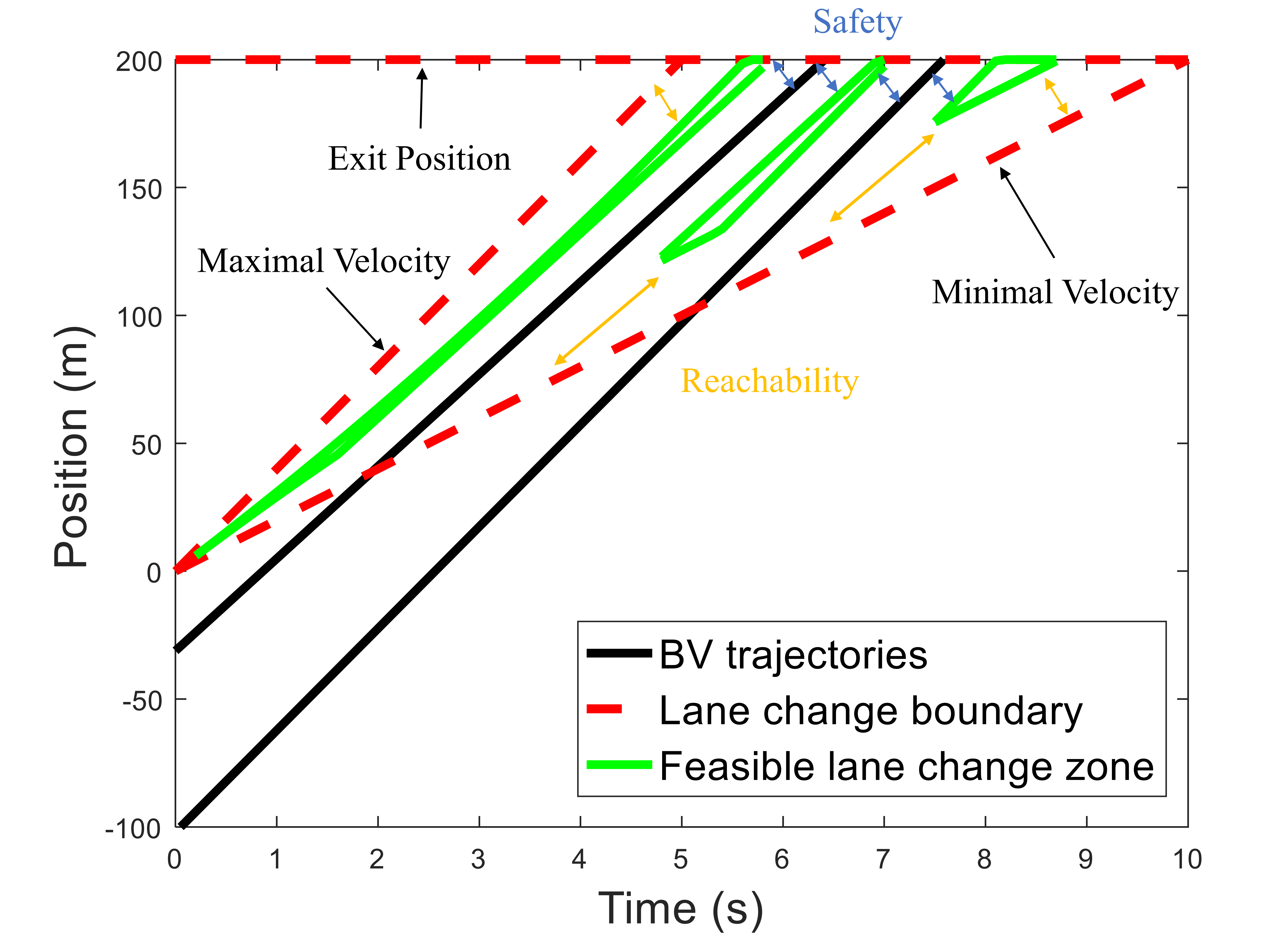}
	\caption{Illustration of the task difficulty evaluation of the highway-exit case.} 
	\label{fig_obj_highway}
\end{figure}

For the specified highway exit case, the maneuver challenge is evaluated based on the proposed concepts. The task is to make a lane change to the right before reaching the off-ramp location. The task solution is defined as a feasible lane-change point $f=(t,p)$, where $t$ is the lane-change time and $p$ is the lane-change position. The feasible lane-change zone $\B{F}$ is determined by maximal/minimal velocity $(v_{max},v_{min})$, highway exit location ($L$), safe time-to-collision gaps ($t_{min}$), and maximal/minimal acceleration ($a_{max}$, $a_{min}$).  Fig. \ref{fig_obj_highway} illustrates an example of the feasible lane change zone for a specific scenario, i.e., $x=[-25,34.5,-100,40]^T$. The initial position of the CAV is set zero. The lane change boundary (shown as the red dashed line) is determined by the maximal/minimal velocity and the off-ramp location. The feasible lane change zone (shown as the green lines), i.e., $\B{F}$, consists of three isolated zones, which are separated by the trajectories of BVs (shown as the black lines). 

For simplicity, we assume all task solutions of this case have the same task solution difficulty. Then, the task difficulty can be estimated as
\begin{eqnarray}
M_f(x) = \sum_{f\in\B{F}}W(f) = -S(\B{F}),
\end{eqnarray}
where $S(\B{F})$ denotes the area of the feasible lane-changing zone. To make the index comparable with exposure frequency, a normalization factor is applied, denoted as $U_S$, which can be obtained by the area enclosed by the lane change boundary.
Finally, the auxiliary objective function of the highway exit case is designed as
\begin{eqnarray}
\min_x J(x) = \min_x \left(  S(\B{F}) / U_S + w \times d(x, \Omega)  \right),
\end{eqnarray}
where $w$ is the weight, and $d(x,\Omega)$ can be obtained similarly as in the cut-in case (Eq. (\ref{eq_d})). The common set ($\Omega$) in this case can be constructed by most frequent scenarios. The parameter values of the auxiliary objective function are listed in Table \ref{tab_highway}.

\linespread{1.2}
\begin{table}[h!]
	\centering
	\footnotesize
	\setlength{\abovecaptionskip}{1pt}
	\setlength{\belowcaptionskip}{3pt}	
	\caption{The parameters for the highway exit case. }
	\label{tab_highway}
	\begin{tabular}{cccc}
		\hline
		\multicolumn{1}{c}{\bfseries Parameter } &   \multicolumn{1}{c}{ \bfseries Value} &\multicolumn{1}{c}{\bfseries Parameter } &   \multicolumn{1}{c}{ \bfseries Value}  \\ \hline
		$a_{max}$ &2 $m^2/s$ &$a_{min}$ &-4$m^2/s$\\ 
		$L$ &200 $m$ & $w$ &1\\ 
		$t_{min}$ &0.5 $s$ &$t$ &$[0,10]$ \\
		$p$ &$[0,200]$ &$U_S$ &500 \\ \hline
	\end{tabular}
\end{table}
\linespread{1.0}

\subsubsection{NDD Analysis}
The NDD from the Integrated Vehicle-Based Safety System (IVBSS) project is used to provide exposure frequency information \cite{sayer2011integrated}. In the IVBSS project, 108 randomly sampled drivers from different ages used sixteen Honda Accord vehicles in an unsupervised manner for a period over 40 days. 
In this paper, the exposure frequency of highway exit scenarios is determined by the car-following events of the two BVs at the rightmost lane. Query criteria are designed to extract car-following events from the database as: (1) vehicle was traveling on a highway; (2) vehicle was traveling at a speed of at least 20 $m/s$ ($\approx$45 $mph$); (3) cruise control function was not activated; (3) dry surface condition; (4) day light condition. The resulting dataset represents a total of $5\times 10^4$ car-following events and $1.47\times10^6$ points of car-following trajectories. The exposure frequency of a highway exit scenario can be estimated as
\begin{eqnarray}
P(x|\theta) = P(p_{0,1}|\theta)P( v_{0,1}, R, v_{0,2}|\theta),
\end{eqnarray}
where $R=p_{0,1}-p_{0,2}$, $P(p_{0,1}|\theta)$ denotes the initial position probability of the leading BV, which can be estimated by uniform distribution, and $P( v_{0,1}, R, v_{0,2}|\theta)$ is obtained from the distribution of car-following trajectories in the NDD.

\subsubsection{SM Construction}
The MOBIL (`minimizing overall braking induces by lane changes') model was proposed by \cite{kesting2007general} to derive human lane-changing rules for discretionary and mandatory lane changes. It provides the utility measurement method for deciding which gap has a desirable lane change position as
\begin{eqnarray}
U_{LG} = \tilde{u}-u + p_{LG}\left( \tilde{u}_{new} -u_{new} + \tilde{u}_{old} - u_{old} \right),
\end{eqnarray}
where $\tilde{u}$ denotes the new acceleration of the CAV after the lane change, $p_{LG}$ is the politeness factor, and $u_{new}, u_{old}$ denote the acceleration of the new follower and old follower respectively. As it is desirable to complete the lane change, the politeness factor is set close to zero, e.g., $p_{LG}=0.1$. To predict the CAV's trajectories before the lane-change, the Model Predictive Control (MPC) \cite{rawlings2009model} is applied, and the trajectory with higher predictive utility of lane change, i.e., $U_{LG}$, will be chosen as the solution to the task.

\subsubsection{Library Generation}
Similar to the cut-in case, a hundred points are uniformly sampled as the initial starting points for the optimization method, and the threshold of critical scenarios is determined as
\begin{eqnarray}
\gamma = \frac{1}{N(\B{X})} = 6.1 \times 10^ {-7},
\end{eqnarray}
similar to Eq. (\ref{eq_thresh_VR_cutin}). The size of total scenarios is $N(\B{X}) = n_p^2 \times n_v^2 = 1.64 \times 10^6$, where $n_p=61$ and $n_v=21$ denote the number of the feasible value of variables $p_{0, i}$ and $v_{0, i}$ respectively. 
After applying the critical scenario searching method, the testing scenario library of the highway exit case is generated. The total number of critical scenarios in the library is 1,895, which is about 0.12\% of all scenarios.

\subsection{CAV Evaluation}
The CAV lane-change model developed in \cite{nilsson2016if} is used for evaluation in this case study. Similarly, the NDD evaluation method is used as the benchmark.
In the proposed method, testing scenarios are sampled from the generated highway exit library, and events of task failures (i.e., cannot exit from the highway) are recorded. Similar to the cut-in case, the $\epsilon$-greedy sampling policy is applied with $\epsilon=0.10$. The task failure rate is estimated to measure the functionality performance of the CAV model in the highway exit case. After the estimated task failure rate converges to a certain estimation precision, the estimated task failure rate is obtained, and the evaluation process is completed, as shown in Eq. (\ref{eq_confidence}).

\begin{figure}[h!]
	\centering
	\begin{minipage}{.9\linewidth}
		\includegraphics[width=1\textwidth]{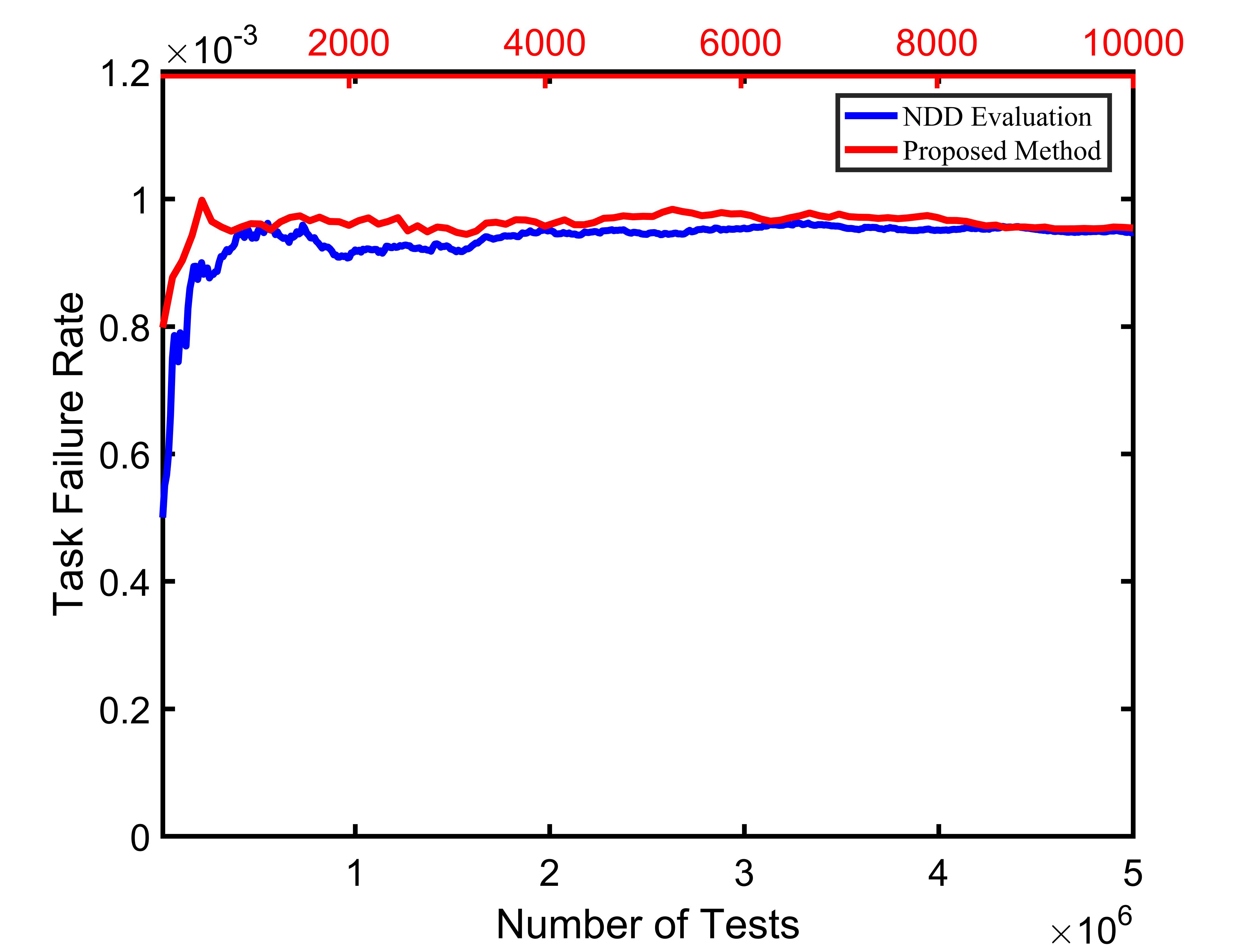}
		\centerline{(a)}
	\end{minipage}
	\begin{minipage}{.9\linewidth}
		\includegraphics[width=1\textwidth]{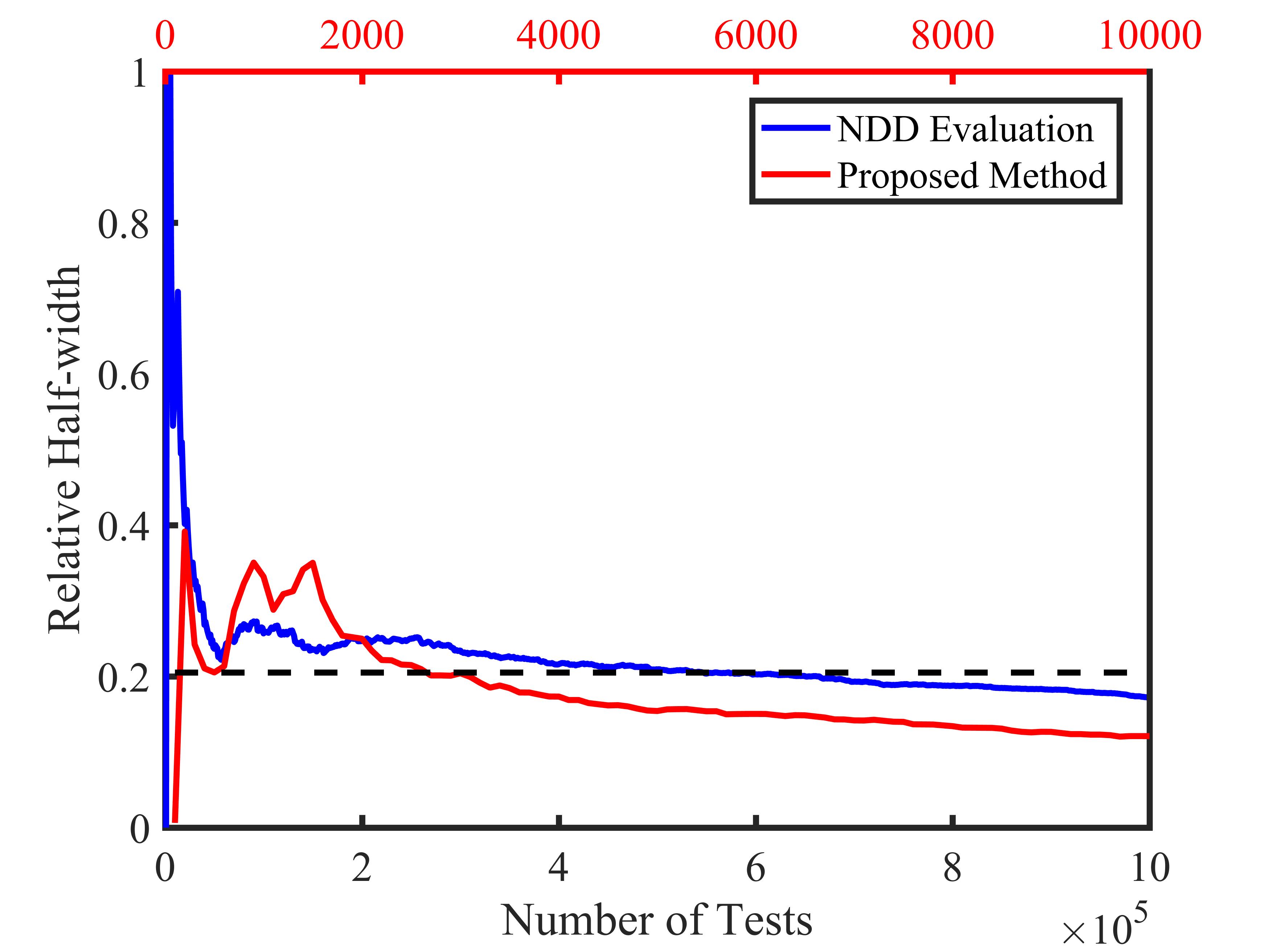}
		\centerline{(b)}
	\end{minipage}
	\caption{Results of the highway exit case: (a) estimation results of the task failure rate; (b) relative half-width of the estimation results.} 
	\label{fig_result_highway}
\end{figure}

Fig. \ref{fig_result_highway} shows the comparison of the two evaluation methods. The legends and axis are the same as those in the cut-in case. Similar with the previous case study, both methods can obtain unbiased estimation of the failure rate with the relative half-width ($\beta=0.2$). Fig. \ref{fig_result_highway} (b) shows that the proposed method achieves this estimation precision after $2.6\times 10^3$ tests, while the NDD evaluation method takes $6.6\times10^5$ tests. The proposed method is about 255 times faster than the NDD evaluation method. 

\section{Car-following Case Study}
 The car-following case is designed to show the ability of the proposed methods in solving the TSLG problem with high-dimensionality. As shown in Fig. \ref{fig_Case} (c), the test CAV follows a BV for a certain period of time. The decision variables include the initial condition and acceleration profile of the leading BV: 
\begin{eqnarray}
\label{eq_x_CF}
x = \left[ v_0, R_0,  \dot{R}_0, u_1, u_2, \dots, u_m \right]^T, x\in\B{X}
\end{eqnarray}
where $v_0$ denotes the initial velocity of the leading BV, $R_0$ and $\dot{R}_0$ denote the initial range and range rate between the BV and CAV, $m$ denotes the total time steps, and $u_1, u_2, \dots, u_m$ denote the acceleration sequences of the BV. If the BV is controlled every $1s$, for a $30s$ car-following scenario, the dimension of the scenario is 33. Since the computation complexity grows exponentially with the dimension, the problem faces ``curse of dimensionality''. To the best of our knowledge, none of the existing evaluation methods can be applied to evaluate the high-dimensional cases. 

The key to handle high-dimensionality is to formulate the TSLG problem as a Markov Decision Process (MDP) problem. 
Let $s=(v_{BV}, R, \dot{R}) \in \C{X}$ denote the state, where $v_{BV}$ denotes the speed of the BV, $R$ is the range, $\dot{R}$ is the range rate, and $\C{X}$ is the feasible set of states. Let $u \in \B{U}$ denote the action, where $\B{U}$ is the feasible acceleration set of the leading BV. It is assumed that the Markovian property holds considering the next action is dependent only on the current state, i.e., the acceleration of the BV is dependent only on its current speed. Then a testing scenario $x$ in Eq. (\ref{eq_x_CF}) can be described as a series of states and actions  (i.e., $s_1 \overset{u_1}{\to} s_2 \overset{u_2}{\to} \dots $), and the set of feasible scenarios $\B{X}$ can be represented by the decision tree, as shown in Fig. \ref{fig_tree}. Every branch from the initial state to the terminal state (i.e., leaf node) specifies a testing scenario. The library generation problem now turns into the problem of finding the critical branches for CAV evaluation. 

\begin{figure}[h!]
	\centering
	\includegraphics[width=0.45\textwidth]{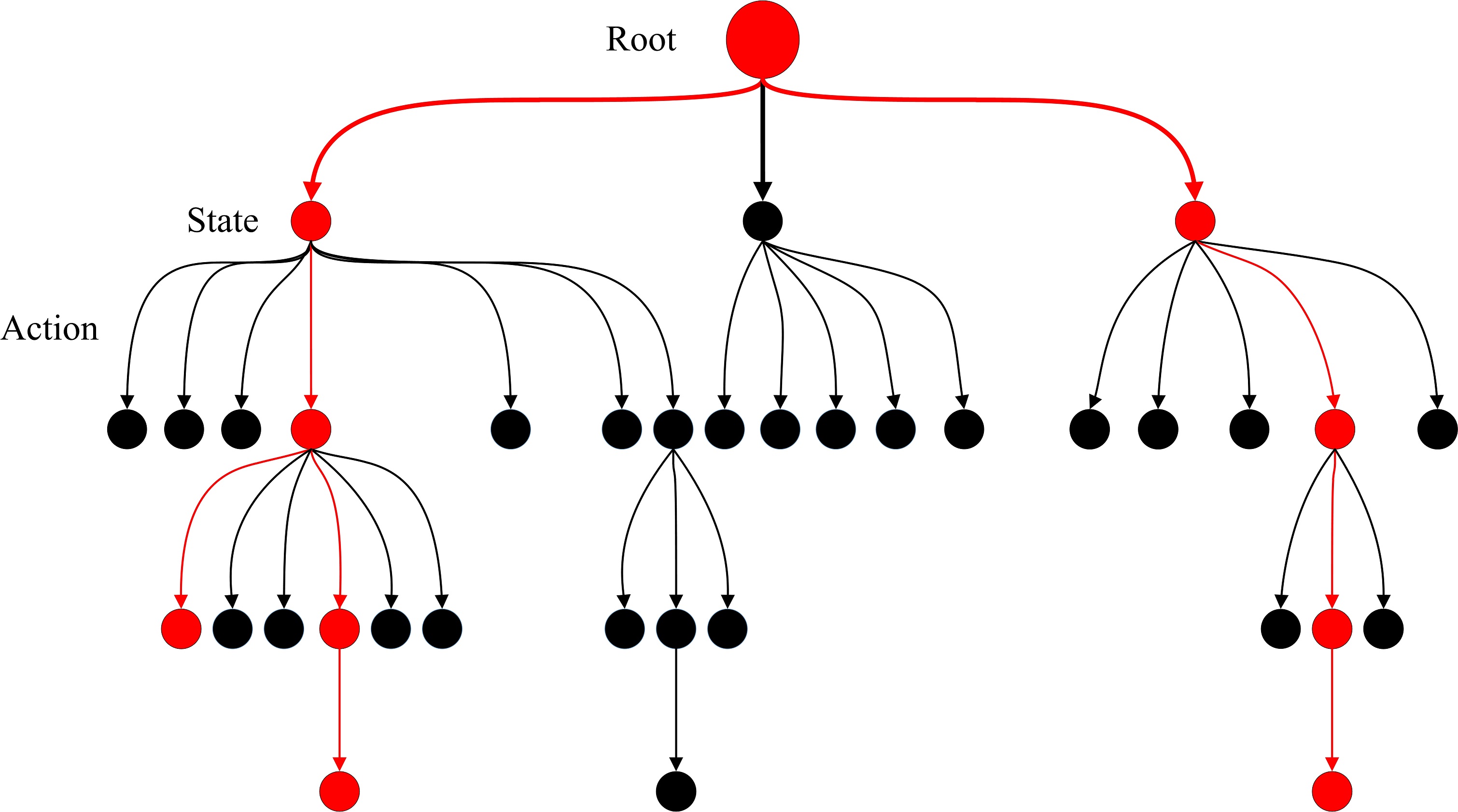}
	\caption{Illustration of the generated library (i.e., red circles and red arrows) in the scenario space.} 
	\label{fig_tree}
\end{figure}

In order to compute the criticality of each scenario, here we define the value of each state-action pair  $Q(s, u)$ as
\begin{eqnarray}
\label{eq_Q}
Q(s_k, u_k) =  P(S|u_k, s_k) P(u_k|s_k), k=1,\cdots,m.
\end{eqnarray}
The definition of $Q$ is consistent with the proposed definition of criticality. As shown in Eq. (\ref{eq_Q}), the left term $P(S|u_k, s_k)$ represents ``the probability of the event $S$ if the scenario is currently at the state $s_k$ and take the action $u_k$'', which measures the maneuver challenge. The right term $P(u_k|s_k)$ represents ``the probability of taking action $u_k$ if the scenario is currently at the state $s_k$'', which is the exposure frequency. With such definition, we can prove that the scenario criticality can be computed as 
\begin{eqnarray}
V(x) = C(x) \prod_{k=1}^{m} Q(s_k, u_k), \nonumber
\end{eqnarray}
where $C(x)$ is a normalization factor of the scenario $x$, and $\theta$ is omitted to simplify notations. The proof of this equation can be found in Theorem 1. 

To obtain $Q(s, u)$, the temporal-difference (TD) reinforcement learning (RL) technique is applied. BVs are the ``agent'' of the RL scheme, and the state of test CAV is the ``environment'', which is represented by a deterministic SM. The state transition is influenced by both BVs and the deterministic SM.
The TD-RL method updates the $Q(s_k, u_k)$ based on the estimation of the next state value $Q(s_{k+1}, u_{k+1})$ (i.e., the TD(0) method in \cite{sutton2018reinforcement}). The iterative update is based on the TD error, which measures the difference between the current estimation of $Q(s_k, u_k)$ and the new estimation. Let $\delta_k$ denote the TD error at the time step $k$, then an iteration equation can be obtained as
\begin{eqnarray}
\label{eq_up_pre}
Q(s_{k+1}, u_{k+1}) \leftarrow  Q(s_k, u_k) + \alpha \delta_k, \nonumber
\end{eqnarray}
where $\alpha$ is the learning rate, e.g., 0.1.
In Theorem 2, we prove that, after the training process of TD-RL, $Q(s, u)$ can converge to the values defined in Eq. (\ref{eq_Q}) if the TD error is defined as
\begin{eqnarray}
\label{eq_TDerr}
\delta_k = \left(
\sum_{u_{k+1}\in\B{U}} Q(s_{k+1}, u_{k+1})
\right) P(u_k|s_k) - Q(s_k, u_k).
\end{eqnarray}
By pruning the uncritical state-action pairs of the decision tree, the critical scenario library contains all branches with $V(x|\theta)>0$. As shown in Fig. \ref{fig_tree}, the branches of consecutive red nodes and arrows represent the critical scenarios.

\subsection{Problem formulation for the car-following case}
In the car-following case, safety is selected as the performance metric and accident rate is used to represent safety. The scenario state contains three variables, i.e., speed of the leading BV ($v_{BV}$), range ($R$) between the leading BV and the test CAV, and the range rate ($\dot{R}$):
\begin{equation}
s =(v_{BV}, R, \dot{R}) \in \C{X}.
\end{equation}
The leading BV's acceleration ($u$) is defined as the action. We discretise the range ($R\in(0,115]$), range rate ($\dot{R}\in [-10,8]$), velocity ($v\in[20,40]$), and acceleration ($u\in[-4,2]$) by $1m$, $1m/s$, $1m/s$, and $0.2m^2/s$ respectively. The leading BV is controlled every $1s$. For a $30s$ car-following case, the size of the entire scenario space is $N(\B{X})=21 \times 115 \times 19 \times 31^{30}$. The size of the entire state space is $N(\C{X}) = 21 \times 115 \times 19 = 45,885$, and the size of the entire state-action space is $N(\C{X})×N(\B{U})=45,885\times31 \approx 1.4 \times 10^5$, both of which are much smaller than the entire scenario space $N(\B{X})$.


\subsection{Library Generation}
The same NDD of the highway exit case is used in the car-following case, where both the car-following and free-driving events are extracted. The car-following events are utilized to calculate the exposure frequency of states $P(s)$, while the free-driving events are utilized to estimate the exposure frequency of actions $P(u|s)$. 

To improve the searching efficiency of critical scenarios, the state space is classified into three zones, i.e., collision zone, dangerous zone, and safe zone. The collision zone is defined by the states $\C{X}_c = \{ s\in \C{X}| R \le d_{acci} \}$, where $d_{acci}$ is a distance threshold for an accident, e.g., $1m$. The safe zone $\C{X}_s$ is defined by the states which cannot lead to an accident with even the most extreme actions of the leading BV, i.e., BV decelerates with maximal deceleration. The dangerous zone $\C{X}_d$ contains the states which have probabilities leading to an accident. Then values of $P(S|u_k, s_k )$ for states in different zones can be obtained as
\begin{equation}
P(S|u_k, s_k) = \left\{ 
\begin{matrix}
0, & s_k \in \C{X}_s \\
1, &s_k \in \C{X}_c
\end{matrix}.\nonumber
\right.
\end{equation}
As shown in Fig. \ref{fig_graph}, a non-trivial car-following testing scenario should start from a dangerous state (i.e., root state) and stop at a collision state or a safe state (i.e., terminal state).  The critical scenario should contain the state in the collision zone. The same car-following SM is applied as in Eq. (\ref{eq_SM_ct}).  By simulations of the SM, the dangerous zone is obtained, which consists about 5,000 states (10\% of all the states).

\begin{figure}[h!]
	\centering
	\includegraphics[width=0.45\textwidth]{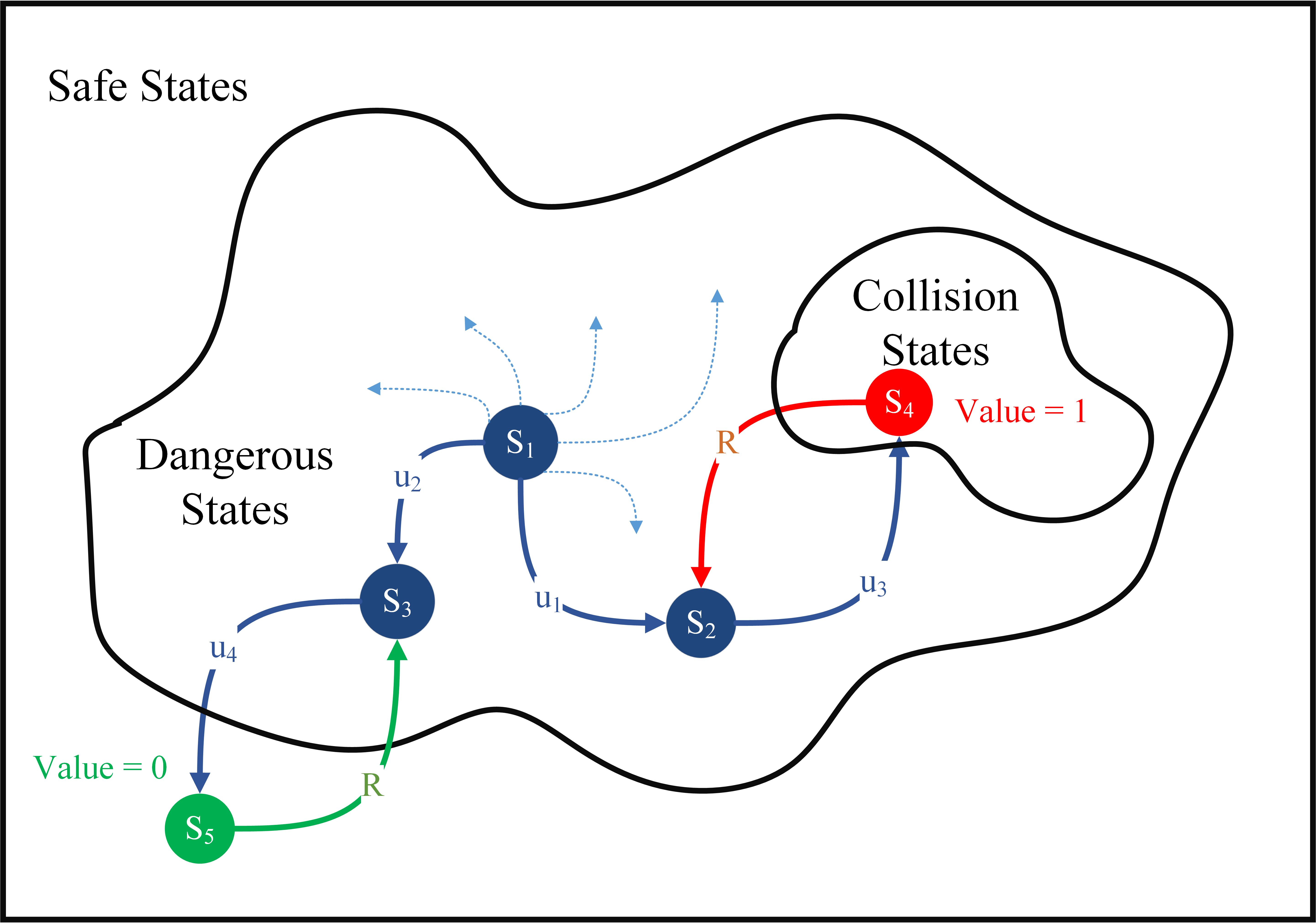}
	\caption{Illustration of the state transitions for car-following scenarios. States are classified into three zones, i.e., collision zone, dangerous zone, and safe zone.} 
	\label{fig_graph}
\end{figure}

The initial $Q$ values are obtained from the NDD. To improve the training efficiency, a uniform distribution is applied as the training policy, which guarantees that all state-action pairs can be visited for unlimited number of times if the training has not stopped \cite{sutton2018reinforcement}. The absolute value of the TD error is defined as the stop criteria as $\abs{\delta_t}< \delta_0$, where $\delta_0$ is a pre-determined threshold, e.g., $10^{-10}$.

\begin{figure}[h!]
	\centering
	\begin{minipage}{.9\linewidth}
		\includegraphics[width=1\textwidth]{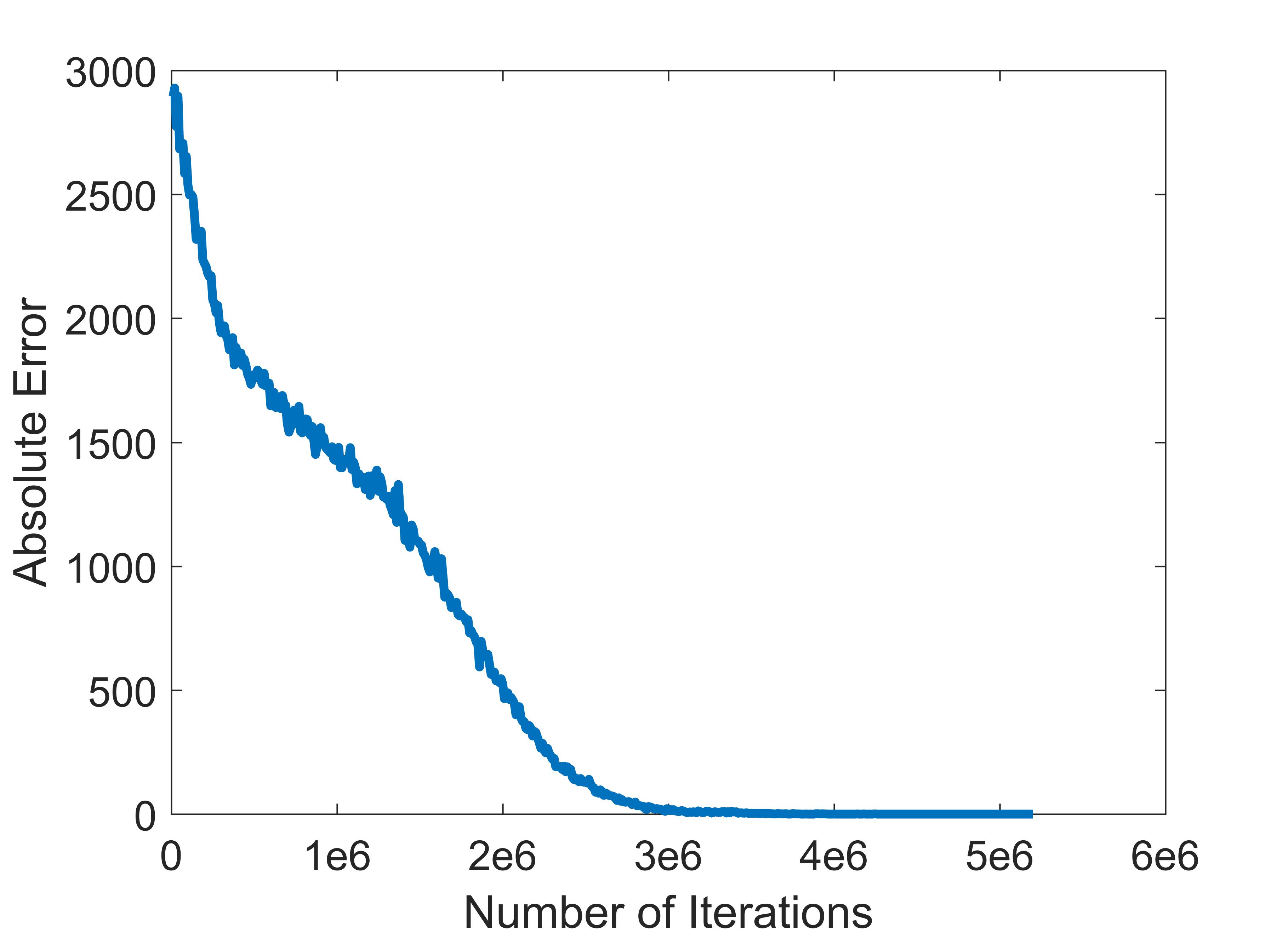}
		\centerline{(a)}
	\end{minipage}
	\begin{minipage}{.9\linewidth}
		\includegraphics[width=1\textwidth]{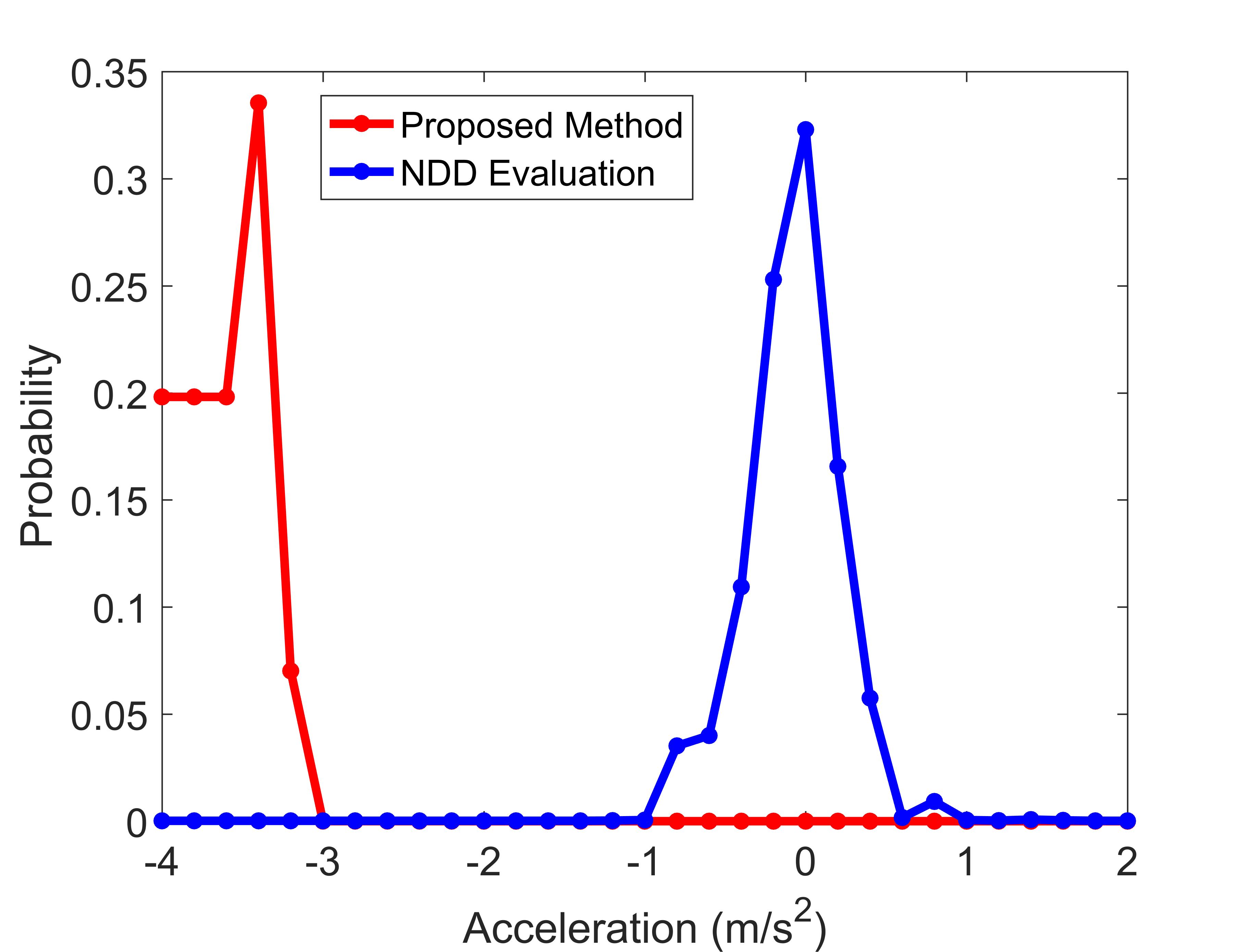}
		\centerline{(b)}
	\end{minipage}
	\caption{(a) The training results of the absolute TD error and (b) probability distribution of the actions at the state $s=(38,6,-2)$.} 
	\label{fig_eg_CF}
\end{figure}

The training is conducted with Matlab 2018, in a workstation equipped with Intel i7-7700 CPU and 16G RAM. It takes about 20 minutes to reach convergence. Fig. \ref{fig_eg_CF} (a) shows the convergence of the absolute TD error with learning iterations. The values of  state-action pairs converge after about $3\times10^6$ steps of iterations. Fig. \ref{fig_eg_CF} (b) shows an example of the probability distribution of the actions for a dangerous state, i.e., $s=(38,6,-2)$. The distribution from NDD is represented as the blue line (i.e., $P(u|s)$), while the generated distribution by the RL-enhanced method is represented as the red line (i.e., $P(u|s, S)$). It shows that the generated distribution behaves more aggressively than NDD with higher probabilities at extreme decelerations. The highest probability lies in $u=-3.4$ $m/s^2$, instead of $u=-4$ $m/s^2$, which is consistent with the proposed definition of criticality combined of both maneuver challenge and exposure frequency.

\subsection{CAV Evaluation}
After the above steps, the testing scenario library of the car-following case is generated. Testing scenarios can be sampled from the scenario library. The initial state is generated by Eq. (\ref{eq_PxS_decom1}), and accelerations of the BV are generated by Eq. (\ref{eq_PxS_decom3}). Similar to the previous cases, the $\epsilon$-greedy sampling policy is applied in the sampling process with $\epsilon=0.1$. As shown by the red line in Fig. \ref{fig_eg_CF} (b), the probability of acceleration greater than -3 is zero, i.e., out of the library. By adopting the $\epsilon$-greedy policy, however, these acceleration values can be sampled with a small probability. Similarly, the initial state has a small probability to be sampled from the safe states as well. The same CAV car-following model used in the cut-in case study \cite{zhao2017accelerated} is evaluated with the generated library. The NDD evaluation method is used as the baseline.

\begin{figure}[h!]
	\centering
	\begin{minipage}{.9\linewidth}
		\includegraphics[width=1\textwidth]{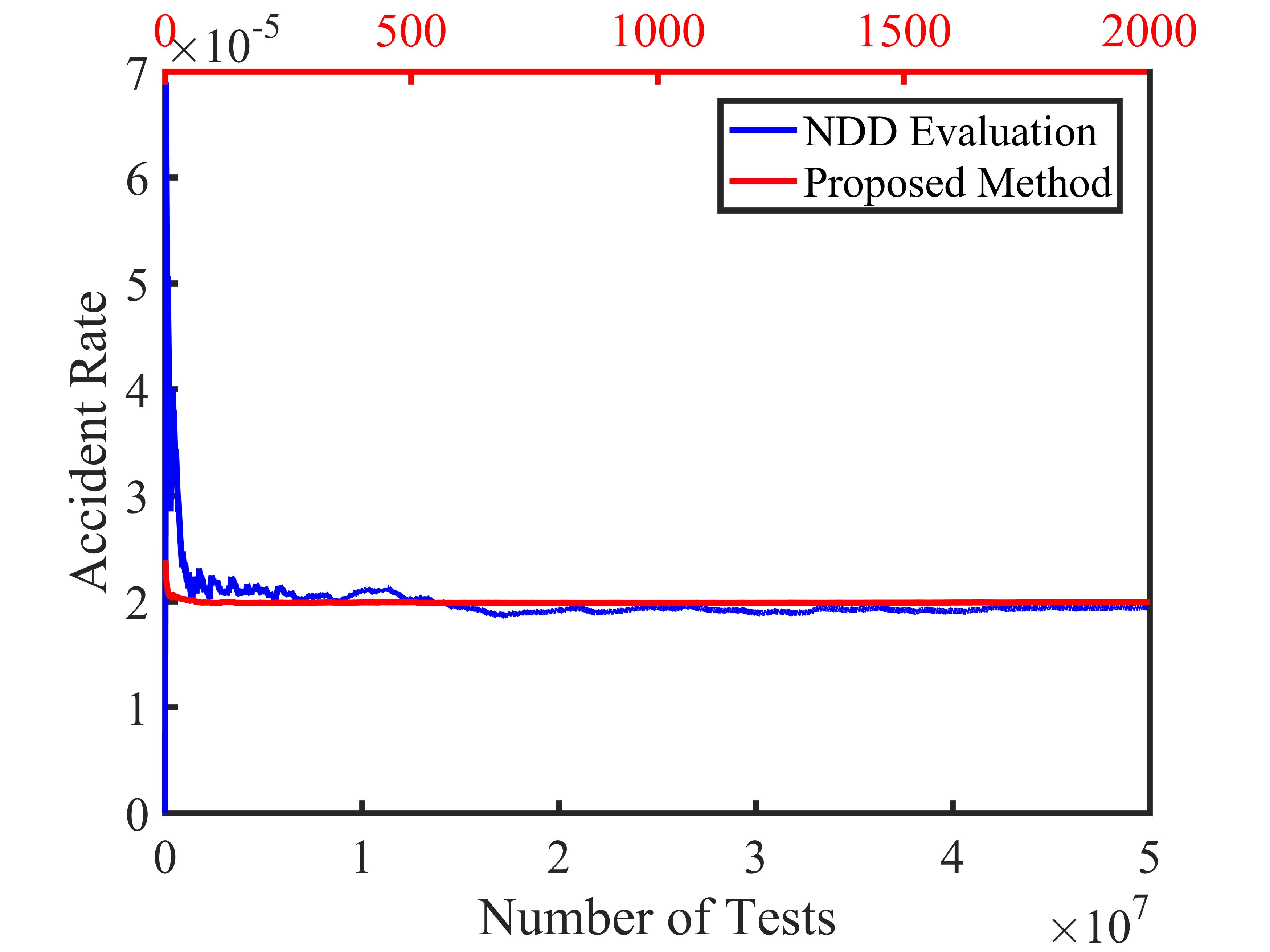}
		\centerline{(a)}
	\end{minipage}
	\begin{minipage}{.9\linewidth}
		\includegraphics[width=1\textwidth]{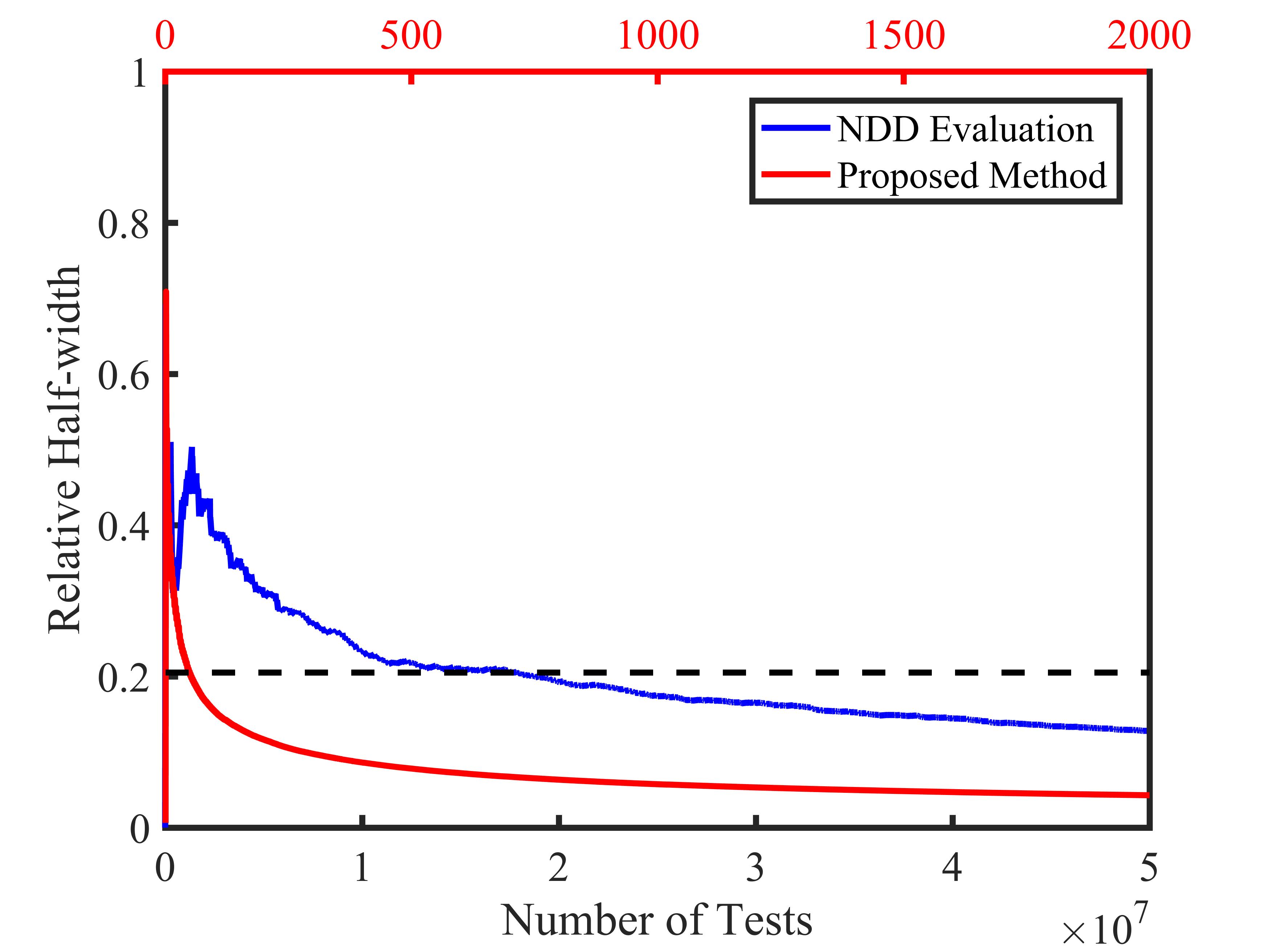}
		\centerline{(b)}
	\end{minipage}
	\caption{Results of the car-following case: (a) estimation results of the accident rate; (b) relative half-width of the estimation results.} 
	\label{fig_result_CF}
\end{figure}

Fig. \ref{fig_result_CF} shows comparison of the two evaluation methods. The blue line represents results of the NDD evaluation method, and the red line represents results of the proposed method. As shown in Fig. \ref{fig_result_CF}, both methods can obtain accurate estimation of accident rate with the same estimation precision ($\beta=0.2$). Fig. \ref{fig_result_CF} (b) shows that the proposed method achieves this estimation precision after $50$ tests, while the NDD evaluation method takes $1.875\times10^7$ tests. The proposed method is about $3.75\times10^5$ times faster than the NDD evaluation method.

\section{Discussions}
In this section, based on the results of the three case studies, the advantages and limitations of the proposed method are discussed.

\subsection{Advantages of the proposed method}

As demonstrated by the case studies, the proposed method is generic as it can be applied for evaluation with different performance metrics (e.g., safety and functionality) and varying CAV models under different ODDs. It can accelerate the CAV testing for both low and high dimensional scenarios.

As demonstrated by all three cases, to reach required evaluation accuracy, the proposed method can significantly reduce the number of tests, comparing with the on-road test method. Because the most time-consuming and expensive step in the CAV evaluation process is expected to be vehicle testing, the proposed method can significantly save cost. Similarly, the method can also be applied in simulation platforms with the same advantage, because testing CAVs in high-fidelity simulations is also the most time-consuming step.

Statistical coverage of scenarios is guaranteed by the proposed method, as all scenarios have possibilities to be tested. It utilizes more domain knowledge (e.g., scenario criticality) and outperforms the enumerative coverage (e.g., enumeration of all possible scenarios with a certain resolution), which suffers from the ``curse of dimensionality''. 

In addition, the performance index, $P(A|\theta)$, can quantitatively and interpretively measure the performance of CAVs in naturalistic driving environment. Taking safety evaluation as an example, the accident rate is the most natural index to evaluate the safety performance. 

\subsection{Limitations of the proposed method}

As indicated in Theorem 2 in the Part I paper, the efficiency of the proposed method is affected by the ``dissimilarity'' between the SM and the CAV under test, i.e., $P(A|x, \theta) - P(S|x, \theta)$. This is the major reason why the acceleration effects of the three cases are different, from 255 to $3.75\times10^5$ times. In this paper, the commonly-used human driving models (such as the IDM in the cut-in case and car-following case) were adopted as SMs, which perform reasonably well and can be served as a starting point for testing library generation. To address the dissimilarity issue, adaptive modification of the SM for different CAVs deserves more investigation. For CAV developers, this problem can be solved naturally as the CAV model can be used directly for scenario generation.

Large-scale NDD is required to calculate the exposure frequency of scenarios, which could be a limitation to apply the proposed method. With the deployment of on-board and infrastructure-based sensors, however, large-scale NDD can be collected with lower cost and become more accessible. For example, both research institutes \cite{wang2017much} and companies (e.g., Waymo, Tesla, Mobileye, \emph{etc.}) are collecting such data.

\section{Conclusions}
 This paper complements the general TSLG methodology developed in the Part I paper by providing three example case studies, i.e., cut-in, highway exit, and car-following. More importantly, the proposed method in Part I was enhanced by a temporal-difference reinforcement learning (TD-RL) method to generate high-dimensional scenarios efficiently. For all three cases, our results show that the proposed method can effectively and efficiently generate the testing scenario library, which can accelerate the evaluation process by 255 to $3.75\times10^5$ times compared with the NDD evaluation method, but with the same accuracy.

Combining Part I and Part II papers, to the best of our knowledge, this is the first study that provides a systematic framework and implementation guidelines for both low and high dimensional scenarios, different performance metrics, and varying CAV models. 

There are many interesting topics that can be further investigated. 
For the proposed TSLG method, since the dissimilarity between the SM and the test CAV is the major cause of evaluation inefficiency, SM as well as the generated library can be updated adaptively using the data collected from the testing process. Another important topic is to investigate the scenario generation method for extremely high dimensions, for example, urban driving environment with hundreds of background vehicles. As both the spatial and temporal complexity will increase greatly, RL-enhanced method presented in this paper needs further improvement. These topics are left for future studies.

\appendices

\section{Proof of Theorems}

\begin{myTheo}
	If $Q(s, u)$ is defined as Eq. (\ref{eq_Q}) and $x$ is defined as Eq. (\ref{eq_x_CF}), the scenario criticality can be computed as
	\begin{eqnarray}
	V(x) = C(x) \prod_{k=1}^{m} Q(s_k, u_k), \nonumber
	\end{eqnarray}
	where
	\begin{eqnarray}
	C(x) = \frac{P(S)} {  \sum_{s_1 \in \C{X}} \left( P(s_1) \cdot  \prod_{k=1}^{m} \left( \sum_{u_k \in \B{U}} Q(s_k, u_k)\right) \right) }. \nonumber
	\end{eqnarray}
\end{myTheo}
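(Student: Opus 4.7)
The plan is to open up $V(x) = P(S|x,\theta)\,P(x|\theta)$ and exploit the Markov structure of the scenario so that both factors can be written as products indexed by the time step $k$, with each step contributing one factor of $Q(s_k,u_k)$.

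First I would factor the scenario probability via the Markov chain rule. Because $x = (s_1, u_1, \ldots, u_m)$, the NDD policy satisfies $P(u_k \mid s_1, u_1, \ldots, s_k) = P(u_k \mid s_k)$, and the intermediate states $s_2, \ldots, s_m$ are determined by the (deterministic) SM dynamics, the probability of $x$ splits as $P(x|\theta) = P(s_1)\prod_{k=1}^{m} P(u_k\mid s_k)$, with no additional transition factors.

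Second I would rewrite $P(S|x,\theta)$ by recognizing that $P(S \mid s_k, u_k)$ is a value-function-type quantity that absorbs all future NDD randomness rooted at the node $(s_k, u_k)$. The Bellman recursion $P(S\mid s_k, u_k) = \sum_{u_{k+1}\in\B{U}} Q(s_{k+1}, u_{k+1})$ (with terminal conditions $P(S \mid s_k, u_k) \in \{0,1\}$ on the collision and safe zones of Fig.~\ref{fig_graph}) is exactly the fixed point that the TD update in Eq.~(\ref{eq_TDerr}) converges to in Theorem~2. Telescoping this recursion along the actual trajectory traced by $x$ expresses $P(S|x,\theta)$ as a product of the per-step factors $P(S\mid s_k, u_k)$ up to a trajectory-independent proportionality that will be absorbed into $C(x)$.

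Combining the two decompositions, every time step contributes $P(S\mid s_k,u_k)\,P(u_k\mid s_k) = Q(s_k,u_k)$, yielding $V(x) \propto \prod_{k=1}^m Q(s_k,u_k)$. To pin down the constant of proportionality I would impose $\sum_{x\in\B{X}} V(x) = P(S)$ and apply the product-of-sums identity
\begin{equation*}
\sum_{u_1,\ldots,u_m}\prod_{k=1}^m Q(s_k,u_k) \;=\; \prod_{k=1}^{m}\Bigl(\sum_{u_k\in\B{U}} Q(s_k,u_k)\Bigr),
\end{equation*}
which is valid at fixed $s_1$ because each $u_k$ appears in exactly one factor (the $s_k$'s being deterministic functions of the earlier choices). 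Summing the identity against $P(s_1)$ over $s_1\in\C{X}$ reproduces the denominator appearing in the claimed expression for $C(x)$, identifying the proportionality constant with $P(S)/D$ as stated.

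The hardest step is the telescoping of $P(S|x,\theta)$: one must distinguish the per-step quantities $P(S\mid s_k, u_k)$ (which are expectations over all future NDD actions) from the fully-conditioned $P(S|x,\theta)$ (which pins down every $u_k$), and verify that the terminal conditions at the collision and safe zones are exactly what is needed so that the telescoped product collapses to $P(S|x,\theta)$ modulo the single $x$-independent factor that is absorbed into $C(x)$ by the normalization $\sum_x V(x) = P(S)$.
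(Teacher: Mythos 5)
Your overall plan---factor $P(x|\theta)$ by the Markov chain rule and turn $P(S|x,\theta)$ into per-step factors $P(S|s_k,u_k)$ so that each step contributes one $Q(s_k,u_k)$---points in the right direction, but two of your steps fail as stated. First, the leftover factor from your ``telescoping'' is \emph{not} trajectory-independent. Since the SM is deterministic, $P(S|s_k,u_k)=P(S|s_{k+1})$, so
\begin{equation*}
\prod_{k=1}^{m}P(S|s_k,u_k)\;=\;P(S|x,\theta)\cdot\prod_{k=2}^{m}P(S|s_k),
\end{equation*}
and the correction $\prod_{k=2}^{m}P(S|s_k)$ depends on the intermediate states $s_2,\dots,s_m$, hence on $x$. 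A single global normalization $\sum_{x\in\B{X}}V(x)=P(S)$ therefore cannot pin it down; indeed $V(x)/\prod_k Q(s_k,u_k)$ is not constant over $\B{X}$, which is precisely why the theorem writes $C(x)$ rather than $C$. Second, your product-of-sums identity is invalid for exactly the reason you cite in its defense: because $s_k$ is a deterministic function of $(s_1,u_1,\dots,u_{k-1})$, the factor $Q(s_k,u_k)$ depends on \emph{all} of $u_1,\dots,u_k$, so the summation indices are coupled across factors and $\sum_{u_1,\dots,u_m}\prod_k Q(s_k,u_k)\neq\prod_k\bigl(\sum_{u_k}Q(s_k,u_k)\bigr)$ on the decision tree.

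The paper avoids both problems by never factoring $P(S|x,\theta)$ at all. It writes $V(x)=P(S|x)P(x)=P(S)\,P(x|S)$, expands $P(x|S)=P(s_1|S)\prod_{k}P(u_k|s_k,S)$ by the chain rule (the Markov property survives conditioning on $S$), and applies Bayes to each factor, giving $P(u_k|s_k,S)=Q(s_k,u_k)/\sum_{u_k\in\B{U}}Q(s_k,u_k)$ and $P(s_1|S)=P(s_1)\sum_{u_1}Q(s_1,u_1)/\sum_{s_1}P(S|s_1)P(s_1)$. The per-step Bayes denominators are exactly the $x$-dependent factor your telescoping misses, and they are what get collected into $C(x)$. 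If you replace your normalization argument with this per-factor Bayes computation, your decomposition of $P(x|\theta)$ and your use of the Bellman identity $P(S|s_k,u_k)=\sum_{u_{k+1}\in\B{U}}Q(s_{k+1},u_{k+1})$ are both correct and the proof goes through.
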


\begin{proof}
	After scenarios are represented by the decision tree, the exposure frequency of a testing scenario can be denoted as
	\begin{eqnarray}
	\label{eq_Px_Sim}
	P(x) = P(s_1) P(u_1|s_1) \dots P(u_m | s_m).
	\end{eqnarray}
	As shown in Eq. (\ref{eq_Value}), the criticality of a scenario is defined as
	\begin{eqnarray}
	\label{eq_Vx}
	V(x) = P(S|x) P(x) =  P(S) P(x|S),
	\end{eqnarray}
	where $P(S)$ is a constant, which can be obtained by Monte Carlo simulation.
	Similar to the Eq. (\ref{eq_Px_Sim}), $P(x|S)$ is denoted as
	\begin{eqnarray}
	\label{eq_PxS_Sim}
	P(x|S) = P(s_1|S) P(u_1|s_1,S) \dots P(u_m | s_m, S).
	\end{eqnarray}
	By applying Bayesian equation and Law of total probability, we have
	\begin{eqnarray}
	\label{eq_PxS_decom1}
	P(s_1|S) = \frac{P(S|s_1) P(s_1)}{\sum_{s_1\in \C{X}} P(S|s_1)P(s_1)},
	\end{eqnarray}
	\begin{eqnarray}
	\label{eq_PxS_decom2}
	P(S|s_1) = \sum_{u_1\in \B{U}} P(S|u_1, s_1)P(u_1|s_1),
	\end{eqnarray}
	\begin{eqnarray}
	\label{eq_PxS_decom3}
	P(u_k|s_k, S) =  \frac{P(S|u_k, s_k) P(u_k|s_k)}{\sum_{u_k \in \B{U}} P(S|u_k, s_k) P(u_k|s_k)},
	\end{eqnarray}
	where $k =1,\cdots, m$. Substituting Eq. (\ref{eq_PxS_Sim}), (\ref{eq_PxS_decom1}), (\ref{eq_PxS_decom2}), and (\ref{eq_PxS_decom3})  into Eq. (\ref{eq_Vx}), the theorem is concluded.
\end{proof}

\begin{myTheo}
	After the training process of TD-RL, $Q(u,s)$ can converge to the values defined in Eq. (\ref{eq_Q}), if the TD error is defined as
	\begin{eqnarray}
	\delta_k = \left(
	\sum_{u_{k+1}\in\B{U}} Q(s_{k+1}, u_{k+1})
	\right) P(u_k|s_k) - Q(s_k, u_k). \nonumber
	\end{eqnarray}
\end{myTheo}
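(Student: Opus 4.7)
The plan is to separate the claim into two ingredients: (i) the target values $Q^\star(s_k,u_k)=P(S|u_k,s_k)P(u_k|s_k)$ are the unique fixed point of the iteration defined by $\delta_k=0$, and (ii) the TD update with the given $\delta_k$ converges to that fixed point under the usual stochastic-approximation hypotheses. Because the car-following process forms a finite MDP in which the SM is deterministic and every episode terminates in either the collision or the safe absorbing zone, (ii) follows from the standard tabular TD(0) convergence theorem in \cite{sutton2018reinforcement} as soon as the uniform training policy described in the library-generation subsection visits every state-action pair infinitely often and the learning-rate sequence is Robbins--Monro. The substantive part of the argument is therefore (i).

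For (i), I would set $\delta_k=0$ so that the fixed-point relation reads
\begin{eqnarray*}
Q(s_k,u_k) \;=\; P(u_k|s_k)\sum_{u_{k+1}\in\B{U}} Q(s_{k+1},u_{k+1}).
\end{eqnarray*}
Substituting the candidate $Q^\star$ into the right-hand side and applying Eq.~(\ref{eq_PxS_decom2}) to the inner sum collapses it to $P(S|s_{k+1})$, so the right-hand side becomes $P(u_k|s_k)\,P(S|s_{k+1})$. Since $s_{k+1}$ is the deterministic SM-successor of $(s_k,u_k)$ and the scenario is Markovian (future events depend on the past only through the current state), $P(S|s_{k+1})=P(S|u_k,s_k)$, which is exactly $Q^\star(s_k,u_k)/P(u_k|s_k)$. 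Hence $Q^\star$ satisfies $\delta_k=0$ identically.

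Uniqueness of the fixed point, which is what makes the convergence statement meaningful, rests on the terminal structure of the state space shown in Fig.~\ref{fig_graph}: on $\C{X}_c$ one has $P(S|s)=1$ and on $\C{X}_s$ one has $P(S|s)=0$, which pins down $Q^\star$ on states one transition away from an absorbing zone, and a backward recursion along the acyclic depth structure of the scenario tree then determines $Q^\star$ on every dangerous state in finitely many steps. I expect this backward-induction step to be the main obstacle: one must verify that the implicit affine operator $Q\mapsto P(u_k|s_k)\sum_{u_{k+1}}Q(s_{k+1},u_{k+1})$ is contractive on the transient (dangerous) subset, which reduces to arguing that every trajectory started in the dangerous zone reaches $\C{X}_c\cup\C{X}_s$ in finitely many steps with probability one under any admissible policy. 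Once that is established, combining uniqueness of $Q^\star$ with the tabular TD(0) convergence theorem closes the proof.
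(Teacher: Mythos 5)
Your proposal follows essentially the same route as the paper: the paper's proof likewise verifies that the target values satisfy the fixed-point relation $Q(s_k,u_k)=\bigl(\sum_{u_{k+1}\in\B{U}}Q(s_{k+1},u_{k+1})\bigr)P(u_k|s_k)$ by writing $P(S|u_k,s_k)=\sum_{s_{k+1}}P(s_{k+1}|u_k,s_k)P(S|s_{k+1})=P(S|s_{k+1})$ (using the deterministic SM) and expanding $P(S|s_{k+1})$ by the law of total probability over $u_{k+1}$, then delegates convergence to the standard TD(0) result in \cite{sutton2018reinforcement}. Your additional discussion of uniqueness via backward induction on the terminating structure and of the Robbins--Monro/visitation conditions makes explicit what the paper leaves implicit in that citation, but it does not change the underlying argument.
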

\begin{proof}
	The TD error is the difference between the estimated value of $Q(s_k, u_k)$ and its estimation from the next state. Therefore, if the first term on the right in Eq. (\ref{eq_TDerr}) is the estimation of $Q(s_k, u_k)$ based on the next state, the theorem can be proved \cite{sutton2018reinforcement}.
	Consider that
	\begin{eqnarray}
	P(S|u_k,s_k) &&= \sum_{s_{k+1}\in \C{X}}P(s_{k+1}|u_k, s_k) P(S|s_{k+1}), \nonumber \\
	&&=P(S|s_{k+1}), \nonumber \\
	&&= \sum_{u_{k+1}\in\B{U}} P(S|u_{k+1}, s_{k+1}) P(u_{k+1}|s_{k+1}), \nonumber\\
	&&=\sum_{u_{k+1}\in\B{U}}Q(s_{k+1}, u_{k+1}), \nonumber
	\end{eqnarray}
	{where the second equivalence is derived considering the SM is deterministic.} Plugging the equation into Eq. (\ref{eq_Q}), we obtain
	\begin{eqnarray}
	Q(s_k, u_k) =  \left(\sum_{u_{k+1}\in\B{U}}Q(s_{k+1}, u_{k+1})\right) P(u_k|s_k), \nonumber
	\end{eqnarray}
	which concludes the theorem.
\end{proof}

\appendices

\bibliographystyle{IEEEtran}
\bibliography{IEEEexample}

\begin{thebibliography}{10}
\providecommand{\url}[1]{#1}
\csname url@samestyle\endcsname
\providecommand{\newblock}{\relax}
\providecommand{\bibinfo}[2]{#2}
\providecommand{\BIBentrySTDinterwordspacing}{\spaceskip=0pt\relax}
\providecommand{\BIBentryALTinterwordstretchfactor}{4}
\providecommand{\BIBentryALTinterwordspacing}{\spaceskip=\fontdimen2\font plus
\BIBentryALTinterwordstretchfactor\fontdimen3\font minus
  \fontdimen4\font\relax}
\providecommand{\BIBforeignlanguage}[2]{{%
\expandafter\ifx\csname l@#1\endcsname\relax
\typeout{** WARNING: IEEEtran.bst: No hyphenation pattern has been}%
\typeout{** loaded for the language `#1'. Using the pattern for}%
\typeout{** the default language instead.}%
\else
\language=\csname l@#1\endcsname
\fi
#2}}
\providecommand{\BIBdecl}{\relax}
\BIBdecl

\bibitem{li2018artificial}
L.~Li, Y.-L. Lin, N.-N. Zheng, F.-Y. Wang, Y.~Liu, D.~Cao, K.~Wang, and W.-L.
  Huang, ``Artificial intelligence test: a case study of intelligent
  vehicles,'' \emph{Artificial Intelligence Review}, vol.~50, no.~3, pp.
  441--465, 2018.

\bibitem{zhou2017reduced}
J.~Zhou and L.~del Re, ``Reduced complexity safety testing for adas \& adf,''
  \emph{IFAC}, vol.~50, no.~1, pp. 5985--5990, 2017.

\bibitem{PEGASUS}
\BIBentryALTinterwordspacing
H.~Hunger, ``Test specifications for highly automated driving functions:
  Highway pilot,'' Tech. Rep., 2017. [Online]. Available:
  \url{https://www.pegasusprojekt.de}
\BIBentrySTDinterwordspacing

\bibitem{zhao2017accelerated}
D.~Zhao, H.~Lam, H.~Peng, S.~Bao, D.~J. LeBlanc, K.~Nobukawa, and C.~S. Pan,
  ``Accelerated evaluation of automated vehicles safety in lane-change
  scenarios based on importance sampling techniques.'' \emph{IEEE Transactions
  on Intelligent Transportation Systems}, vol.~18, no.~3, pp. 595--607, 2017.

\bibitem{jung2007worst}
D.~Jung, D.~Jung, C.~Jeong, Y.~Kou, and H.~Peng, ``Worst case scenarios
  generation and its application on driving,'' SAE Technical Paper, Tech. Rep.,
  2007.

\bibitem{zhao2018accelerated}
D.~Zhao, X.~Huang, H.~Peng, H.~Lam, and D.~J. LeBlanc, ``Accelerated evaluation
  of automated vehicles in car-following maneuvers,'' \emph{IEEE Transactions
  on Intelligent Transportation Systems}, vol.~19, no.~3, pp. 733--744, 2018.

\bibitem{li2019parallel}
L.~Li, X.~Wang, K.~Wang, Y.~Lin, J.~Xin, L.~Chen, L.~Xu, B.~Tian, Y.~Ai,
  J.~Wang \emph{et~al.}, ``Parallel testing of vehicle intelligence via
  virtual-real interaction,'' \emph{Sci. Robot}, vol.~4, 2019.

\bibitem{feng2019testing}
S.~Feng, Y.~Feng, C.~Yu, Y.~Zhang, and H.~X. Liu, ``Testing scenario library
  generation for connected and automated vehicles, part i: Methodology,''
  \emph{IEEE Transactions on Intelligent Transportation Systems}, 2020.

\bibitem{kalra2016driving}
N.~Kalra and S.~M. Paddock, ``Driving to safety: How many miles of driving
  would it take to demonstrate autonomous vehicle reliability?''
  \emph{Transportation Research Part A: Policy and Practice}, vol.~94, pp.
  182--193, 2016.

\bibitem{owen2013monte}
A.~B. Owen, ``Monte carlo theory, methods and examples,'' \emph{Monte Carlo
  Theory, Methods and Examples. Art Owen}, 2013.

\bibitem{wasserman2013all}
L.~Wasserman, \emph{All of statistics: a concise course in statistical
  inference}.\hskip 1em plus 0.5em minus 0.4em\relax Springer Science \&
  Business Media, 2013.

\bibitem{ross2017introductory}
S.~M. Ross, \emph{Introductory statistics}.\hskip 1em plus 0.5em minus
  0.4em\relax Academic Press, 2017.

\bibitem{vogel2003comparison}
K.~Vogel, ``A comparison of headway and time to collision as safety
  indicators,'' \emph{Accident analysis \& prevention}, vol.~35, no.~3, pp.
  427--433, 2003.

\bibitem{chen2016comparison}
R.~Chen, R.~Sherony, and H.~C. Gabler, ``Comparison of time to collision and
  enhanced time to collision at brake application during normal driving,'' SAE
  Technical Paper, Tech. Rep., 2016.

\bibitem{bezzina2014safety}
D.~Bezzina and J.~Sayer, ``Safety pilot model deployment: Test conductor team
  report,'' \emph{Report No. DOT HS}, vol. 812, p. 171, 2014.

\bibitem{gong2018evaluation}
X.~Gong, Y.~Guo, Y.~Feng, J.~Sun, and D.~Zhao, ``Evaluation of the energy
  efficiency in a mixed traffic with automated vehicles and human controlled
  vehicles,'' \emph{arXiv preprint arXiv:1806.00377}, 2018.

\bibitem{ranney1994models}
T.~A. Ranney, ``Models of driving behavior: a review of their evolution,''
  \emph{Accident Analysis \& Prevention}, vol.~26, no.~6, pp. 733--750, 1994.

\bibitem{ro2018formal}
J.~W. Ro, P.~S. Roop, A.~Malik, and P.~Ranjitkar, ``A formal approach for
  modeling and simulation of human car-following behavior,'' \emph{IEEE
  Transactions on Intelligent Transportation Systems}, vol.~19, no.~2, pp.
  639--648, 2018.

\bibitem{hamdar2008existing}
S.~Hamdar and H.~Mahmassani, ``From existing accident-free car-following models
  to colliding vehicles: exploration and assessment,'' \emph{Transportation
  Research Record: Journal of the Transportation Research Board}, no. 2088, pp.
  45--56, 2008.

\bibitem{sayer2011integrated}
J.~R. Sayer, S.~E. Bogard, M.~L. Buonarosa, D.~J. LeBlanc, D.~S. Funkhouser,
  S.~Bao, A.~D. Blankespoor, and C.~B. Winkler, ``Integrated vehicle-based
  safety systems light-vehicle field operational test key findings report,''
  2011.

\bibitem{kesting2007general}
A.~Kesting, M.~Treiber, and D.~Helbing, ``General lane-changing model mobil for
  car-following models,'' \emph{Transportation Research Record}, vol. 1999,
  no.~1, pp. 86--94, 2007.

\bibitem{rawlings2009model}
J.~B. Rawlings and D.~Q. Mayne, \emph{Model predictive control: Theory and
  design}.\hskip 1em plus 0.5em minus 0.4em\relax Nob Hill Pub. Madison,
  Wisconsin, 2009.

\bibitem{nilsson2016if}
J.~Nilsson, J.~Silvlin, M.~Brannstrom, E.~Coelingh, and J.~Fredriksson, ``If,
  when, and how to perform lane change maneuvers on highways,'' \emph{IEEE
  Intelligent Transportation Systems Magazine}, vol.~8, no.~4, pp. 68--78,
  2016.

\bibitem{sutton2018reinforcement}
R.~S. Sutton and A.~G. Barto, \emph{Reinforcement learning: An
  introduction}.\hskip 1em plus 0.5em minus 0.4em\relax MIT press, 2018.

\bibitem{wang2017much}
W.~Wang, C.~Liu, and D.~Zhao, ``How much data are enough? a statistical
  approach with case study on longitudinal driving behavior,'' \emph{IEEE
  Transactions on Intelligent Vehicles}, vol.~2, no.~2, pp. 85--98, 2017.

\end{thebibliography}

\begin{IEEEbiography}[{\includegraphics[width=1in,height=1.25in,clip,keepaspectratio]{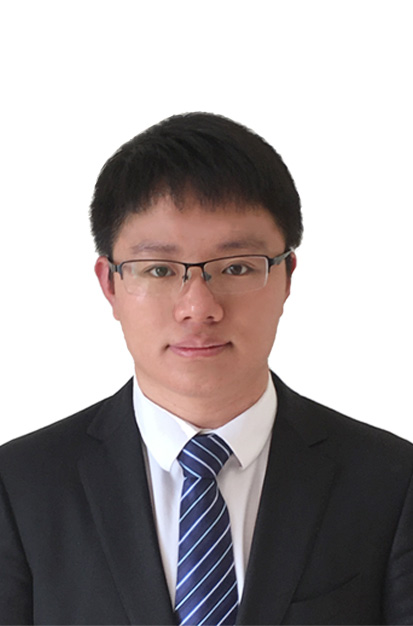}}]{Shuo Feng}
	received the bachelor’s degree and Ph.D. degree in Department of Automation from Tsinghua University, China, in 2014 and 2019. He was also a joint Ph.D. student in Civil and Environmental Engineering in University of Michigan, Ann Arbor, from 2017 to 2019. He is currently a postdoctoral researcher in Civil and Environmental Engineering in University of Michigan, Ann Arbor.	His current research interests include connected and automated vehicle evaluation, mixed traffic control, and transportation data analysis.
\end{IEEEbiography}

\begin{IEEEbiography}[{\includegraphics[width=1in,height=1.25in,clip,keepaspectratio]{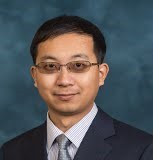}}]{Yiheng Feng}
	is currently an Assistant Research Scientist at University of Michigan Transportation Research Institute. He graduated from the University of Arizona with a Ph.D degree in Systems and Industrial Engineering in 2015. He has a Master degree from the Civil Engineering Department, University of Minnesota, Twin Cities in 2011. He also earned the B.S. and M.E. degree from the Department of Control Science and Engineering, Zhejiang University, Hangzhou, China in 2005 and 2007 respectively. His research interests include traffic signal systems control and security, and connected and automated vehicles testing and evaluation.
\end{IEEEbiography}

\begin{IEEEbiography}[{\includegraphics[width=1in, height=1.25in,clip,keepaspectratio]{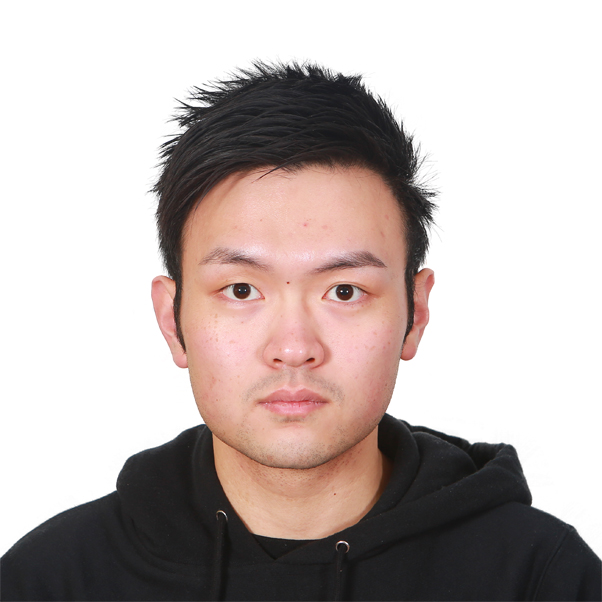}}]{Haowei Sun}
	is currently a graduate at Department of Civil and Environmental Engineering at University of Michigan. He received the bachelor’s degree in Department of Automation  from Tsinghua University, China, in 2019, and he visited University of Michigan for a summer research internship in 2018. His research interests include intelligent transportation, optimization method and deep reinforcement learning.
\end{IEEEbiography}

\begin{IEEEbiography}[{\includegraphics[width=1in,height=1.25in,clip,keepaspectratio]{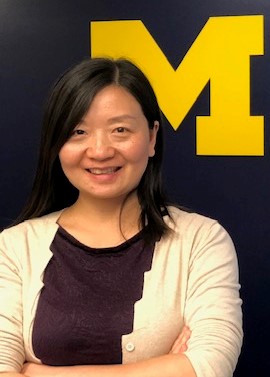}}]{Bao Shan}
	is an Associate Professor at the University of Michigan-Dearborn, and an Associate Research Scientist at the University of Michigan Transportation Research Institute, where she has been conducting research on human factors issues related to transportation systems. She received her Ph.D. in Mechanical and Industrial Engineering from University of Iowa in 2009. Her current research interests focus on human factors issues related to connected and automated vehicle technologies. 
\end{IEEEbiography}

\begin{IEEEbiography}[{\includegraphics[width=1in,height=1.25in,clip,keepaspectratio]{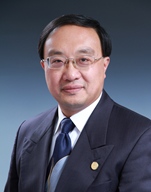}}]{Yi  Zhang}
	received   the   BS   degree   in1986   and   MS   degree   in   1988  from Tsinghua University in China, and earned  the  Ph.D.  degree  in  1995  from  the University  of  Strathclyde  in  UK.  He  is a   professor   in   the   control   science   and engineering  at  Tsinghua  University  with his  current  research  interests  focusing  on intelligent  transportation  systems. His  active  research  areas include  intelligent  vehicle-infrastructure  cooperative  systems, analysis  of  urban  transportation  systems,  urban  road  network management,  traffic  data  fusion  and  dissemination,  and  urban traffic control and management.   His research fields also cover the advanced control theory and applications, advanced detection and measurement, systems engineering, etc.
\end{IEEEbiography}

\begin{IEEEbiography}[{\includegraphics[width=1in,height=1.25in,clip,keepaspectratio]{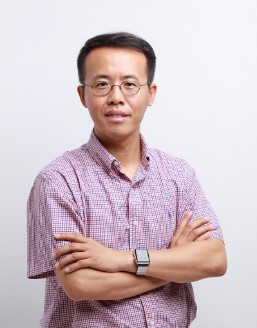}}]{Henry X. Liu}
	is a Professor of Civil and Environmental Engineering at the University of Michigan, Ann Arbor and a Research Professor of the University of Michigan Transportation Research Institute. He also directs the USDOT Region 5 Center for Connected and Automated Transportation. Dr. Liu received his Ph.D. degree in Civil and Environmental Engineering from the University of Wisconsin at Madison in 2000 and his Bachelor degree in Automotive Engineering from Tsinghua University in 1993. Dr. Liu's research interests focus on transportation network monitoring, modeling, and control, as well as mobility and safety applications with connected and automated vehicles. On these topics, he has published more than 100 refereed journal articles. Dr. Liu is the managing editor of Journal of Intelligent Transportation Systems and an associate editor of Transportation Research Part C. 
\end{IEEEbiography}

\end{document}